\documentclass{article}

\usepackage{PRIMEarxiv}

\usepackage[utf8]{inputenc} 
\usepackage[T1]{fontenc}    
\usepackage{hyperref}       
\usepackage{url}            
\usepackage{booktabs}       
\usepackage{amsfonts}       
\usepackage{nicefrac}       
\usepackage{microtype}      
\usepackage{lipsum}
\usepackage{fancyhdr}       
\usepackage{graphicx}       
\graphicspath{{media/}}     
\usepackage{natbib}
\usepackage{enumitem}
\usepackage[skip=2pt,font=scriptsize]{subcaption}
\usepackage{amsmath,amssymb}
\usepackage{amsthm} 
\usepackage[thinlines]{easytable}
\usepackage{microtype}
\usepackage{array, booktabs}
\usepackage{bbm}

\DeclareMathOperator*{\argmax}{argmax} 

\newtheorem{insight}{Insight}
\newtheorem{proposition}{Proposition}
\newtheorem{theorem}{Theorem}
\newtheorem{lemma}{Lemma}
\newtheorem{corollary}{Corollary}

\pagestyle{fancy}
\thispagestyle{empty}
\rhead{ \textit{ }} 

\fancyhead[LO]{\textit{Reinforcement Learning with Information Shaping: Embedding Demand Development in On-Demand Fulfillment}}

\title{To Start Up a Start-Up---Embedding Strategic Demand Development in Operational On-Demand Fulfillment via Reinforcement Learning with Information Shaping
}

\author{
  Xinwei Chen \\
  School of Finance and Operations Management \\
  The University of Tulsa \\
  Tulsa, US \\
  \texttt{xinwei-chen@utulsa.edu} \\
   \And
  Marlin W. Ulmer \\
  Otto von Guericke Universität Magdeburg \\
  Magdeburg, Germany \\
  \texttt{marlin.ulmer@ovgu.de} \\
  \AND
   Barrett W. Thomas \\
   Department of Business Analytics \\
   The University of Iowa \\
   Iowa City, US \\
   \texttt{barrett-thomas@uiowa.edu} \\
}

\begin{document}
\maketitle

\begin{abstract}
The last few years have witnessed rapid growth in the on-demand delivery market, with many start-ups entering the field. However, not all of these start-ups have succeeded due to various reasons, among others, not being able to establish a large enough customer base. In this paper, we address this problem that many on-demand transportation start-ups face: how to establish themselves in a new market. When starting, such companies often have limited fleet resources to serve demand across a city. Depending on the use of the fleet, varying service quality is observed in different areas of the city, and in turn, the service quality impacts the respective growth of demand in each area. Thus, operational fulfillment decisions drive the longer-term demand development. To integrate strategic demand development into real-time fulfillment operations, we propose a two-step approach. First, we derive analytical insights into optimal allocation decisions for a stylized problem. Second, we use these insights to shape the training data of a reinforcement learning strategy for operational real-time fulfillment. Our experiments demonstrate that combining operational efficiency with long-term strategic planning is highly advantageous. Further, we show that the careful shaping of training data is essential for the successful development of demand. 
\end{abstract}

\keywords{On-demand delivery \and endogenous demand \and sequential decision-making \and reinforcement learning \and information shaping}

\section{Introduction}

Recent years have seen the rise of many start-ups in urban transportation and delivery. Some start-ups survive the critical first years and turn into successful companies. Examples are GrubHub for restaurant meal delivery, GoPuff for delivery of convenience store items, Instacart for grocery delivery, Uber for passenger transportation, and Bird for e-scooter rentals. However, besides the few success stories, there are hundreds of start-ups that eventually failed. For instance, after a successful venture capital funding round in 2021, the German instant delivery start-up Gorillas became a start-up ``unicorn" with a value of over a billion US dollars. A few years later, the company is on the brink of bankruptcy. Similar stories can be told for many start-ups in instant delivery, with their creative names (``Zume", ``Jokr", ``Alpakas", ``MilkRun") already forgotten. 
The reasons behind the fall of so many start-ups are manifold. Their business model might not be sustainable, or the competition is too fierce. In addition, another crucial reason could be that companies fail to grow their business early by satisfying and expanding their customer base \citep{picken2017startup,capital2024}. Especially in the early start-up phase, the available resources for such companies are often limited. For example, the German ride-sharing start-up MOIA had only 100 vehicles available when launching its service in Hamburg, Germany, a city of nearly two million people. At the same time, there is a global shortage of drivers in the urban delivery market, leading to limited delivery resources, failed deliveries, and customer dissatisfaction, as experienced by companies such as Domino's \citep{Haddon_2022}.  

So, how should delivery companies use their delivery resources to establish and grow their customer base? The question may seem trivial to giant companies as they have adequate delivery resources. However, there still exist a large number of smaller-sized start-up companies that struggle to meet all the demand, particularly since the COVID-19 pandemic \citep{Mansfield_2022,SupplyChainBrain_2023}. For a delivery company with limited resources, should it use its resources to serve and grow demand all over the city, risking limited service availability, compromised quality, and slow growth? Or should it prioritize service in certain parts of the city to grow demand fast but likely neglect customers in other parts? As we show in this paper, the answer is not simple. First, managing daily operations of urban delivery and transportation services is inherently challenging. Real-time decisions must be made about whom to offer service to and how to dynamically route the service fleet in reaction to current needs and in anticipation of future demand. Consequently, there is a surge in papers focused on optimizing the dispatching and routing of service fleets \citep{liu2021time,stroh2022tactical,zhang2023routing}. The main body of the existing literature treats demand distributions as exogenous and, for the given distributions, optimizes daily operations, such as maximizing the demand served in the city for that day. Another stream in the literature studies the strategic control of resources and demand \citep{perera2020retail,deng2021urban,chen2022food,he2022profit}. This area addresses on-demand challenges with a focus on higher-level supply and demand management, rather than operational fulfillment optimizations. However, when looking at the early phases of start-ups, the service is new and demand is impacted by who is served. Demand in a region can increase over time if enough resources are dedicated to keeping customer satisfaction high, and at the same time, demand in a region might decrease if service quality is insufficient \citep{y2004drives,cheung2009impact,lo2012consumer}. 
Even though the two parts, effective ``intra-day" operations and strategic ``inter-day" control, are strongly intertwined by endogeneity of demand, an integrated consideration has not been studied in the literature yet.

In this paper, we study the on-demand, same-day delivery problem with endogenous demand. We seek to manage daily operations while accounting for the impact on future demand. To do so, we begin by seeking to understand the end that we are seeking. We investigate a stylized model of the inter-day control problem for two service regions. In this stylized, two-region problem, we prove that the optimal strategy should either distribute resources equally to both service regions or focus on one region exclusively. Which strategy to choose depends on the potential for demand growth in the regions. We then integrate the insights of the proof into the training of a policy for daily ``intra-day" operations. This policy uses reinforcement learning (RL) to decide which stochastically requesting customers to serve and how to dynamically route the service fleet accordingly. To incorporate the insights into strategic ``inter-day" demand control, we actively guide the training data of the RL policy to nudge our policy toward a desired outcome. We call this adaptation of training data \textit{information shaping}, to the best of our knowledge, a new concept in the field of RL. We conduct a comprehensive computational study. We derive the following managerial insights:

\begin{itemize}[leftmargin=*]
    \item When executed effectively, strategic demand development via inter-period anticipation can be a powerful tool to enhance business and grow revenue in the long run. 
    
    \item Effective strategic demand development improves service levels and revenue during the planning horizon and leads to a larger customer base for future operations.
    
    \item When regions are similar and demand per region is limited, it is worth investing in all regions even if initial demand in one region is small. If demand per region is unlimited, focusing on the region with high initial demand might be advantageous.
    
    \item The spatial distribution of customers plays an important role in the decision on how to invest fleet resources. Here, it is usually beneficial to focus on conveniently located regions.
          
    \item The length of the time horizon impacts the effectiveness of strategic demand development decisions.
            
    \item Prioritizing regions might lead to more revenue in the planning horizon compared to treating all regions equally, but it also results in a very unbalanced final demand structure. 
\end{itemize}

We make the following problem- and method-oriented contributions:

\paragraph{Problem.} We are among the first to integrate long-term objectives into on-demand delivery operations. We present analytical insights into the general problem structure and use these insights in the design of an effective decision policy. We show the value of the policy in a comprehensive computational study and present important managerial insights. 

\paragraph{Methodology.} To the best of our knowledge, we are the first to present information shaping in a reinforcement learning context. We illustrate its general functionality and show its effectiveness in a problem of high complexity.

The paper is organized as follows. Section~\ref{sec:lit} reviews the existing literature on demand evolution, same-day delivery, demand management in on-demand delivery problems, and reinforcement learning. Section~\ref{sec:problem} presents a sequential decision process model of the problem. In Section~\ref{sec:methodology}, we discuss the motivation of our approach, formally introduce the concept of information shaping, analyze a stylized inter-day problem, and use the insights to design our information shaping RL-policy. Section~\ref{sec:experiments} provides an extensive experimental study and managerial implications. Section~\ref{sec:conclusions} closes the paper with conclusions and future work.

\section{Literature Review}\label{sec:lit}

Our work is novel in several aspects. First, we consider the evolution of customer demand for on-demand delivery. Second, we integrate demand evolution into the problem model and maximize the company's service level in the long term. Finally, we propose novel training and implementation strategies for reinforcement learning. To this end, Section~\ref{lit:model} reviews the literature on demand evolution and different demand models used in last-mile logistics and operations management. In Section~\ref{lit:objective}, we review the existing literature on on-demand delivery, which usually disregards demand evolution. Section~\ref{lit:management} reviews the main approaches to managing customer demand in the literature on last-mile delivery. Section~\ref{lit:rl} presents the techniques used to modify information in machine learning and reinforcement learning.

\subsection{Demand Evolution}\label{lit:model}


In the majority of work with stochastic demand, it is assumed that the demand distribution is exogenous and static, i.e., customer demand follows a certain distribution with a deterministic and constant expected value. However, in practice, demand may be endogenous because individual customers' previous shopping experience affects their future purchase decisions and thus impacts the overall demand for the service \citep{y2004drives}. For example, customers exchange their shopping experience with their relatives, friends, neighbors, etc., through word of mouth \citep{arndt1967word}. Research shows that satisfaction significantly impacts the volume of word of mouth \citep{fainmesser2021ratings} and that word of mouth impacts customers' purchase decisions \citep{cheung2009impact,lo2012consumer}. The increasing use of online social media nowadays further strengthens this impact \citep{prasad2017social}. Our problem takes into consideration this impact of service level on customer demand. Specifically, we assume that the demand for service from a neighborhood may increase as the company makes more deliveries to the neighborhood, and vice versa. Thus, how a company allocates limited delivery resources to different neighborhoods today impacts future demand. 

When it comes to optimizing with demand evolution, the most closely related work is that of \citet{he2022profit}. In that work, the authors consider an on-demand delivery problem in a hybrid-workforce environment, where a traditional service provider cooperates with a platform and can transfer part of the demand to the platform. The objective is to maximize the total profits the service provider and platform receive over the long run. They model the service provider's demand as being linearly dependent on the demand in the previous period and \textit{market thickness}. The authors define market thickness as the available freelancers on the platform, which is analogous to \textit{inventory level} to be discussed next. Our work also seeks to maximize utility in the long run but differs in the complexity of the problem. \citet{he2022profit} model their problem as a dynamic matching problem, and the decisions are concerned with how many orders to transfer to the platform each day. While our problem is a dynamic vehicle routing problem---the decisions involve an acceptance decision for each customer request as well as the routing decisions throughout the day. Hence, the complexity of our problem is substantially amplified, which demands a different solution strategy that can balance complexity and real-time computation. In their work, \citet{he2022profit} also model the platform's demand as being linearly dependent on the number of available freelancers, characterized by a random variable. The authors derive managerial insights regarding the optimal level of collaboration between the service provider and the platform for different combinations of the random variable's mean and standard deviation. In our work, we consider various demand models and offer insights into the optimal allocation of delivery resources with different geography and varying potential for demand growth in different neighborhoods. Another work that is related to on-demand delivery but considers the evolution of workforce is \citet{luy2023strategic}. The authors focus on managing the development of different types of crowdsourced drivers while minimizing the total costs of wages and operational costs. Different from our work, the changes in the number of drivers are modeled by certain probability distributions, not dependent on service levels. 



In the literature on operations and supply chain management, there are works available related to inventory- or service-dependent demand, which do not have a routing component. \citet{deng2014statistical} study a multi-period newsvendor problem in which customers may leave the system after encountering a stockout. They investigate four different problem settings and show that a state-independent stockout penalty cost should be used when demand can be learned from historical data. Although the problems are different, similar to leaving after a stockout, we assume that customers in a certain region may stop requesting services if their request was not fulfilled. \citet{qian2014market} studies supplier selection in a market-based environment in which demand is linearly dependent on attributes including service level. The author shows that, when selecting suppliers, companies should align their operational characteristics, such as costs and delivery time, with market characteristics including demand and customer sensitivity to maximize profitability. Similar to \citet{qian2014market}, we also consider service level as a decision variable and assume that demand depends on service level. Through a field experiment at a supplier, \citet{craig2016impact} investigate the impact of a supplier's inventory service level on retailers' demand. The results show that increases in historical fill rate are associated with statistically significant and managerially substantial increases in current retailer orders, and this impact is on the demand rather than just sales. \citet{ccavdar2023word} investigate the optimal shipment policy for retailers when customer demand can be impacted by the perceived service quality. In their problem, the authors measure the perceived service quality by delivery time, and the decisions are determining the length of each shipment cycle. They study various shipment policies based on whether the retailer is aware of demand evolution and show that market size and customer sensitivity are important factors that determine long-term demand. In this paper, although the problem is different, we assume a demand model similar to the work mentioned above. Specifically, the demand from customers increases as the company's service level increases, and vice versa. There is also a stream of research that focuses on using historical data to learn inventory decisions when demand can change, such as \citet{chen2021data} and \citet{xiong2022data}. 

\subsection{Objectives in Same-Day Delivery Problems}\label{lit:objective}


A same-say delivery (SDD) problem is a dynamic and stochastic vehicle routing problem, as decisions are made throughout the day and order information is not known in advance. Due to the fast growth of e-commerce in the recent decade, there is increasing literature on SDD. \citet{boysen2021last} provide a comprehensive review of last-mile delivery problems including SDD from the perspective of operations research. 

In SDD, due to the restrictions on drivers' working hours, service providers usually operate for a limited number of hours a day. Because of these separate operational periods from day to day, the majority of the literature maximizes a notion of daily utility \citep{klapp31,klapp32,klapp2020request,drones,ulmer45,cosmi2019scheduling,cote2021dynamic,liu_2019,voccia,dayarian2020same,dayarian2020crowdshipping,schubert2020same,ahamed2020deep,ulmer2020dynamic, bracher2021learning,Jahanshahi,deepQ,stroh2022tactical,banerjee2022fleet,banerjee2022has,chen2023same,auad2024dynamic}. This daily utility is commonly in the form of service level, such as the number of customers served or the experienced delay of customers. In this research, although our methods and insights can be applied to other objectives, we seek to maximize the number of customers served. However, unlike the existing literature, we do so with consideration of demand evolution. Therefore, our ultimate goal is to maximize the expected number of services provided over a horizon that is much longer than a day (e.g., two years). Despite having different objectives, \citet{ahamed2020deep,Jahanshahi,deepQ,chen2023same} among the work mentioned above also use a RL approach in their work. In our experiments, one of the benchmark policies uses the classical RL approach similar to those methods.  

\subsection{Demand Management with Vehicle Routing}\label{lit:management}

Demand management has been extensively studied in operations management and supply chain management. However, it has only recently garnered attention in the realm of transportation and logistics, particularly within the last decade. \citet{fleckenstein2023recent} provide a comprehensive review of demand management approaches for on-demand delivery problems. The authors review work on demand management in attended home delivery, same-day delivery, and mobility-on-demand. \citet{wassmuth2023demand} review the demand management in the literature particularly on attended home delivery.

The existing literature manages customer demand mainly through acceptance decisions, assortment optimization, or dynamic pricing. Upon receiving a request or a batch of requests, the service provider can decide which customers to accept for service according to the service provider's capacity, profitability, and the resource reservation for the future \citep{hosni2014shared,xu2018large,ulmer2018budgeting,holler2019deep,kullman2022dynamic,giallombardo2022profit}. Note that, although using acceptance decisions in their problems, these papers optimize in a single-day context. Assortment optimization refers to managing demand when multiple products or fulfillment options are available \citep{atasoy2015concept,mackert2019choice,gallego2019revenue,lang2021anticipative,lang2021multi}. Dynamic pricing seeks to price the service dynamically \citep{campbell2006incentive,yang2016choice,vinsensius2020dynamic,al2020approximate,ulmer2020dynamic}.

Unlike our work, the work mentioned above does not consider demand development in the longer term, while we manage customer demand by looking at a much longer horizon. In this paper, we manage demand through acceptance decisions. Specifically, for each customer request received, the service provider needs to decide whether to offer the service with consideration of delivery capacity, delivery deadlines, and reservation of delivery resources for future requests. Due to the demand evolution over days, these acceptance decisions do not only manage the demand in the day but also impact customer demand in the future. 


\subsection{Information Shaping in Machine Learning}\label{lit:rl}


In this paper, we propose information shaping for RL. Information shaping is defined by controlling the selection of data, in our case demand scenarios, used to train the policy. This approach differs from existing methods in the RL literature that seek to guide learning by manipulating the Bellman equation~(\ref{eq:bellman}) in different ways, including modifying the decision space (action shaping), the reward function (reward shaping), and the expected cost/reward-to-go \citep{hildebrandt2023reinforcement}. In our problem, a single, demand-sensitive policy is used on a horizon consisting of many days, and customer demand evolves over the days. Therefore, information shaping controls the demand distributions used to generate training data for the RL. Our goal in shaping information is to guide the policy toward the theoretically optimal solution while still ensuring effective decision-making in individual days. To the best of our knowledge, no similar RL-approaches exist.

Information shaping offers a way to incorporate human domain knowledge and general analytical insights into the machine's learning process. In the existing RL literature, another way to embed such knowledge is through reward shaping. Reward shaping encourages intended decisions by assigning different rewards to decisions \citep{ng1999policy,laud2002reinforcement,grzes2017reward,eschmann2021reward}. However, as we will show later, policies that use reward shaping are not effective for our problem. The reason is that, when demand can change over time, simply modifying the reward is not capable of actively controlling how demand evolves, and thus the learned policies cannot effectively cope with demand evolution. 

When the reward function is not easy to define, the existing literature handles the issue with human-in-the-loop RL (HLRL). HLRL initializes training using a supervised model to predict whether a response is good or bad using human labeling \citep{retzlaff2024human}. HLRL is, for example, used in the training of large language models where humans can evaluate the output's quality \citep{economist2024}. For our problem, HLRL is difficult to implement because we do not know whether a policy is good or not in the long run.

Another option is to control the decisions available to the RL. This is known as action shaping \citep{kanervisto2020action} or action forging \citep{zadeh2024explainable}. The method is used to either avoid obviously ineffective decisions or to create policies that would be ``explainable" to a human user. 

While we are not aware of other literature that incorporates information shaping as we do in this paper, oversampling in supervised learning is an analogous concept. During oversampling, data that is less represented in the training set is sampled at a higher rate than other data so that the model sees enough such examples from which to learn. An often-used example of such imbalanced data is cancer data, where negative responses significantly outnumber positive ones. In supervised learning, a descriptive analysis of the training data reveals data that may need to be oversampled \citep{james2013introduction}. Information shaping uses knowledge outside of the RL to adapt the distribution used to generate training scenarios and thus influence the sample paths on which the RL learns. However, whereas a descriptive analysis of the data can reveal what features need to be oversampled in supervised learning, what distributions to use for scenario generation for effective RL is unknown. In the case of this paper, we analyze a stylized version of the problem to gain insights that inform the choice of distributions. 

\section{Problem Description}\label{sec:problem}
In this section, we first provide a detailed problem description and then a formal model of the problem.

\subsection{Problem Narrative and Examples}

We consider an urban on-demand delivery service that provides delivery services over a long service horizon in a given service area. The service area is partitioned into regions. Over the course of each day on the horizon, customers from different regions request fast on-demand service, e.g., delivery from a warehouse to their homes. The service provider serves customers using the same fleet of vehicles every day. The vehicles dynamically pick up the ordered goods from the warehouse and deliver them to the customers within a predefined time span. They must deliver all loaded goods before they can return to the warehouse, and ongoing tours cannot be altered. Over the course of the day, whenever a new customer requests service, the provider determines whether the request is accepted for service and, if so, how to integrate service into the vehicles' delivery routes.

\begin{figure}[t]
\centering
\includegraphics[width=\textwidth]{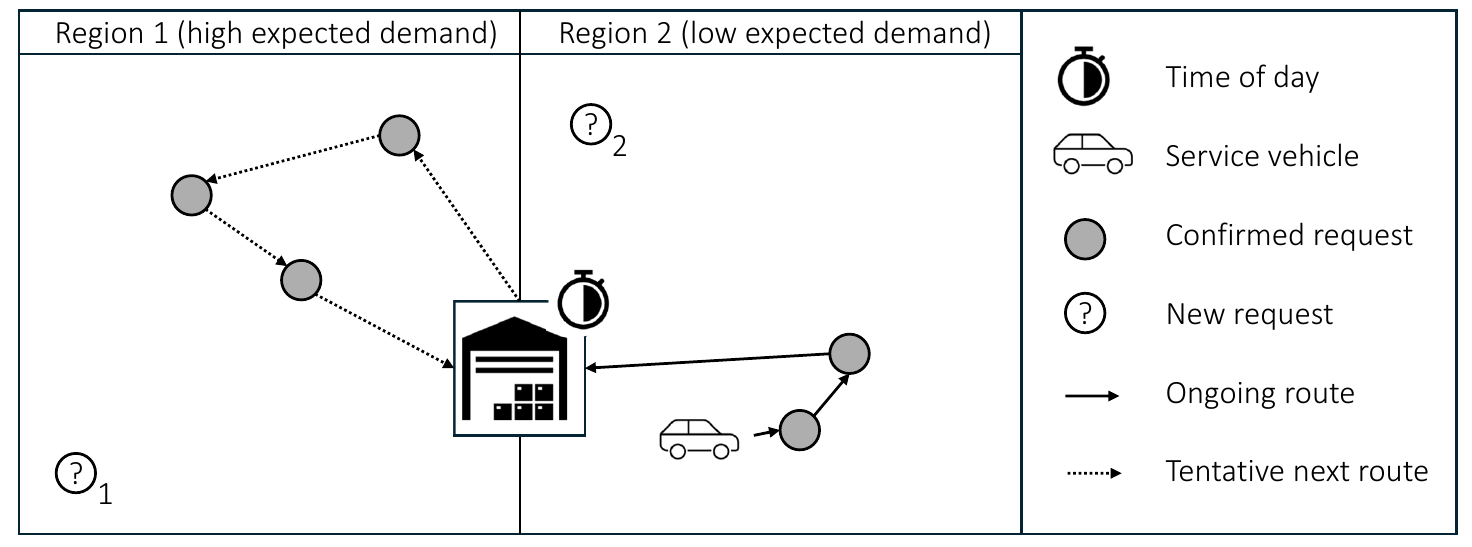}
\caption{Example for a state of the intra-day problem.}
\label{fig:intra-day}
\end{figure}

The challenges of the intra-day decision-making are illustrated in Figure~\ref{fig:intra-day}. The figure presents a small example of a state in the intra-day problem. In the example, a single vehicle performs delivery service from a central warehouse to customers requesting service in two regions, Region~1 with high expected demand and Region~2 with low expected demand. The state illustrated is in the middle of the day. The vehicle is currently serving two customers in Region~2 before returning to the warehouse to pick up the orders for three customers in Region~1. The tentative delivery route for the three orders is indicated by dotted arrows. In the state, two new customers requested service. For this example, we assume that at most one of the two orders can be served feasibly due to time constraints. The decision of which order to serve is not trivial. Order~2 could be integrated efficiently into the tentative delivery route. However, the likelihood that future orders may occur in the neighborhood of Order~1 is higher. 

Across days, the expected demand in the regions varies. Specifically, the expected demand is endogenous and varies based on the relative service availability in previous days, i.e., the percentage of customers who were offered service, from now on called \textit{service level}. In the case of a high service level, the expected demand in a region increases, and in the case of a low service level, it decreases. An exemplary demand development for the two previously considered regions is shown in Figure~\ref{fig:example_inter}. The figure shows two potential developments in realized (solid lines) and served demand (dashed lines) for the two regions over the problem's time horizon. This development is controlled by the service decisions made during each day as previously illustrated in Figure~\ref{fig:intra-day}. In the upper part of Figure~\ref{fig:example_inter}, we first assume that we pursue a policy prioritizing the high demand region, Strategy~A. Demand in Region~1 is well served leading to a high service level and an increase in demand over time, eventually approaching the maximum possible demand by the end of the horizon. For Region~2, the service level is low, and both realized and served demands diminish over the time horizon. We further show an alternative policy, Strategy~B, equally focusing on both regions in the lower part of Figure~\ref{fig:example_inter}. Here, we observe that demand in Region~1 remains rather constant while demand in Region~2 increases. In the end, both regions have about the same expected demand.


\begin{figure}[t]
\centering
\includegraphics[width=\textwidth]{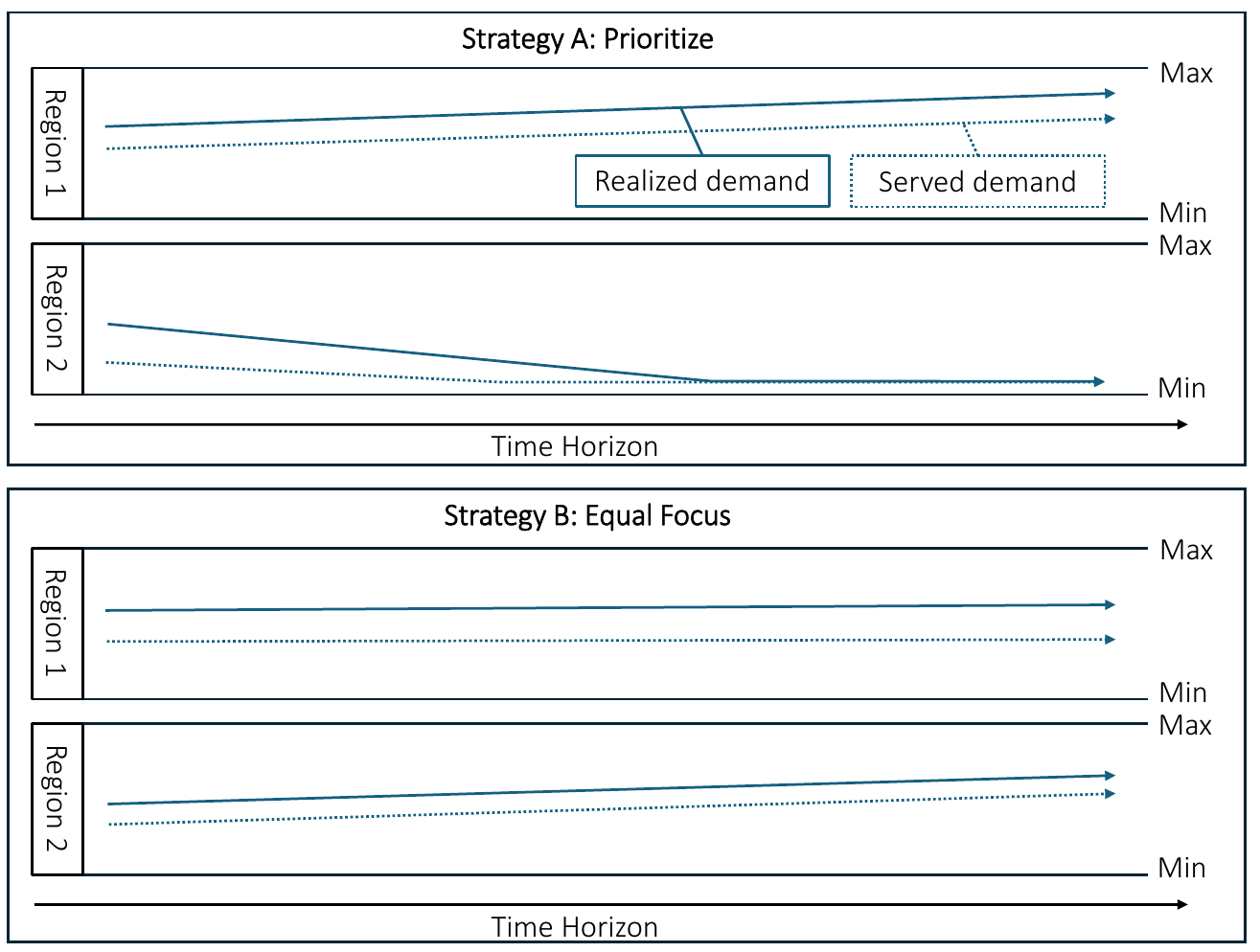}
\caption{Example for demand developments in the inter-day problem.}
\label{fig:example_inter}
\end{figure}

Figure~\ref{fig:example_inter} also relates to the objective of the service provider, to maximize the expected number of services across all regions and over the entire horizon. We note that there might be another objective: maximizing the final customer base, which is the total expected demand over all regions at the end of the time horizon. The results of our experimental study show that both goals are closely aligned. Focusing on our objective also leads to a higher customer base in the end.

\subsection{Modeling}


The problem combines two sub-problems, an intra-day problem of serving demand within the day and an inter-day problem of managing the expected demand of the regions across many days. For ease of presentation and in preparation of our methodology, we model the sub-problems individually. To differentiate the indices used in the two sub-models, we add a superscript ``$\iota$" for intra-day notations. In the following, we first sketch the intra-day model and then embed it into the inter-day model.

\subsubsection{Intra-day model.}\label{sec:intra_day}

The intra-day model is a classical on-demand delivery problem and can be modeled as a sequential decision process \citep{Powell2022RL}. For a detailed mathematical model with full notation, we refer the reader to \cite{chen2023same}. However, since in our problem, the intra-day model is connected to the inter-day demand development, we extend the classical models by adding information on observed and served demand for each sub-region in the city. In the following, we describe the individual components of the process: states, decisions, reward function, 
stochastic information and transition, and solution. We begin with the notation necessary for our solution methodology.

\paragraph{Preliminary notation.}

A service provider offers on-demand deliveries in the service area $\mathcal{Z}$. The service area is partitioned into $I$ sub-regions, $\mathcal{Z} = \cup_{i=1}^{I} \mathcal{Z}_i$. The expected demand (i.e., the number of requests) for region $\mathcal{Z}_i$ is $D_i$. The warehouse $W$ is not necessarily in the service area. Each day, customers make delivery requests during the time window $[0,T_\mathcal{C}]$. We denote the time at which the $k$th customer $c_k$ makes a request as $t(c_k)$. The service provider then needs to decide whether the service will be offered. Every accepted request must be fulfilled no later than $\bar{\delta}$ units of time since the request was made. We call $t(c_k)+\bar{\delta}$ the delivery deadline of request $c_k$. Note that the time of request and the location of the customer $c_k$ are not known until the request is made at $t(c_k)$. If the request $c_k$ resides in region $\mathcal{Z}_i$, we add $c_k$ to the regional customer set $\mathcal{C}_i$. Thus, $\mathcal{C}=\cup_{i=1}^{I} \mathcal{C}_i$ denotes the set of customers who request service in the whole service area. 

The service provider manages a fleet of $P$ vehicles, $\mathcal{V}=\{v_1,v_2,\dots,v_P\}$, to make deliveries to customers. These vehicles can make multiple trips between the warehouse and customers. They can still work after $T_\mathcal{C}$, the time at which the service provider stops receiving requests. However, due to the restriction on working hours, the vehicles must complete all assigned deliveries and return to the warehouse no later than $T_\mathcal{V}$, where $T_\mathcal{V} \geq T_\mathcal{C}$. The time it takes a vehicle to travel from one place (the warehouse or a customer) to another is determined by the function $\tau(\cdot,\cdot)$. Each vehicle spends $t_W$ units of time loading packages at the warehouse and $t_\mathcal{C}$ units of time dropping off a package at a customer. 


\paragraph{State.} A decision point occurs every time a customer $c_k$ makes a request for delivery service. Let $t_k^\iota$ denote the $k$th decision point and thus $t_k^\iota=t(c_k)$. 
At decision point $t_k^\iota$, the state $S_k^\iota$ comprises the following information: (1) Time of the decision point $t_k^\iota$, (2) location of customer $c_k$.
    (3) accepted customers that have not been loaded onto any vehicles yet with their locations and deadlines,
    (4) vehicles' currently assigned delivery tours,
    (5) region information including each region's expected demand $D_i$ and each region's number of requests accepted during $[0,t_k^\iota)$. This information on the collection of requests is needed for assessing resource allocation in the regions when considering the longer horizon.

In the initial state $S_0^\iota$ at $t=0$, there are no pre-accepted requests, and all vehicles are idle at the warehouse. 

\paragraph{Decision.} At decision point $t_k^\iota$, we need to make a decision. Let $\mathcal{X}_k^\iota$ represent the set of all feasible decisions in state $S_k^\iota$. In the intra-day model, decision $x_k^\iota=(a_k,\Theta_k)$ consists of an acceptance decision $a_k$ and a routing decision $\Theta_k$. The acceptance decision $a_k$ is defined as 
$$ a_k=\left\{
\begin{array}{rcl}
0,       &      & {\textrm{if customer $c_k$ is not offered service,}}\vspace{0.2cm}\\
1,     &      & {\textrm{if customer $c_k$ is offered service.}}
\end{array} \right. $$
Once the acceptance decision $a_k$ is made, the routing decision $\Theta_k$ updates the vehicles' planned delivery tours. As aforementioned, we do not preempt or alter ongoing tours, but can only update the next tour of a vehicle. A feasible routing decision must not violate any accepted customer's delivery deadline or any vehicle's working-hour restriction. Readers are referred to \citet{chen2023same} for a mathematical definition of the decision space. 

\paragraph{Reward.} In the intra-day problem, the reward of a decision is 1 if the customer is accepted. Otherwise, the reward is 0. Formally, the reward function $R^\iota(\cdot, \cdot)$ of a decision is defined as 
$$ R^\iota(S_k^\iota,x_k^\iota)=\left\{
\begin{array}{rcl}
1,       &      & {\textrm{if $a_k=1$,}}\vspace{0.2cm}\\
0,     &      & {\textrm{if $a_k=0$.}}
\end{array} \right. $$


\paragraph{Stochastic information and transition.} After the decision is made, the process proceeds. Let $\Omega^\iota$ be set of the exogenous information realizations, each assigned a probability. Let $\omega^\iota_{k+1}\in\Omega^\iota$ represent the realized information after $x_k^\iota$ is selected in state $S_k^\iota$. The transition function $S^{M^\iota}$ then provides a new state $S_{k+1}^\iota=S^{M^\iota}(S_k^\iota,x_k^\iota,\omega_{k+1}^\iota)$. Hereby, exactly one of the following two cases will take place. 
\begin{itemize}
    \item If $\omega_{k+1}^\iota=\emptyset$, the process terminates at the final state, and thus $S_{k+1}^\iota=S_K^\iota$.
    \item If the realized information contains a new customer request, i.e., $\omega_{k+1}^\iota=\{c_{k+1}\}$, the process transitions to a new decision point $t_{k+1}^\iota=t(c_{k+1})$. Thus, a new state $S_{k+1}^\iota$ occurs at decision point $t_{k+1}^\iota$, and the transition function updates the time, the set of customers, and the state of the vehicles.
\end{itemize}

\paragraph{Solution.} 
A solution to the intra-day problem is a policy $\pi\in\Pi$ that selects a decision for each state, where $\Pi$ is the set of all policies. For simplicity, we let $X^\pi (S_k^\iota)$ represent the decision in state $S_k^\iota$ under policy $\pi$. Then, the intra-day value of a policy $\pi$ is defined as 
$$\mathbb{E} {\bigg[\sum_{k=0}^{K} R^\iota(S_k^\iota,X^\pi(S_k^\iota))|S_0^\iota\bigg]}.$$

\subsubsection{Inter-day model.}


In this section, to distinguish the different days on the inter-day horizon, we append a subscript $\tau_m$ to the indices used in the intra-day model. For example, the expression $S_{\tau_m,k}^\iota$ represents the \textit{k}th state in day $\tau_m$. Further, inter-day notations do not have the superscript $\iota$. For example, we write $S$ to indicate inter-day states in contrast to $S^\iota$ for intra-day states, and $\mathcal{S}$ to indicate the set of all inter-day states. In the inter-day model, the horizon consists of $M$ consecutive days, i.e., day $\tau_1$, $\tau_2$, \dots, $\tau_M$. 


\paragraph{State.} There are $N$ decision points $t_{n}$ that occur every $\tau_{\textrm{upd}}$ days. An inter-day state $S_{n}$ occurs at the beginning of day $(n-1)\tau_{\textrm{upd}}+1$, where $n = 1, 2, \dots, N$. The state includes information about time and newly revealed demand distributions $\Omega_n^\iota$ for the upcoming $\tau_{\textrm{upd}}$ days, where $n$ corresponds to day $(n-1)\tau_{\textrm{upd}}+1$. Mathematically, we denote the state by $S_{n}=[(n-1)\tau_{\textrm{upd}}+1, D_{n,1},\dots,D_{n,I}]$, where $D_{n,i}$ represents the expected demand for Region~$i$ at decision point $t_n$.

\paragraph{Decision and reward.} Different from the intra-day problem, the inter-day problem involves a single crucial decision $x_{n}$: the selection of an intra-day policy to be used in the next $\tau_{\textrm{upd}}$ days. This single decision $x_{n}$ selects a policy from the intra-day policy set $\Pi$. The reward of decision $x_{n}$ is the number of services provided in the next $\tau_{\textrm{upd}}$ days, defined as 
$$R(S_{n},x_{n})=\sum_{m=(n-1)\tau_{\textrm{upd}}+1}^{n\tau_{\textrm{upd}}}\sum_{k=0}^{K} R^\iota(S_{\tau_m,k}^\iota,X^{x_{n}}(S_{\tau_m,k}^\iota))|S_{\tau_m,0}^\iota.$$
Since the customer requests are stochastic, the reward is a random variable.


\paragraph{Demand transitions.} After $\tau_{\textrm{upd}}$ days, the stochastic information $\omega_{n+1}\in\Omega$ reveals a reward, the accumulated services, and a new $\Omega_{n+1}^\iota$ for the next $\tau_{\textrm{upd}}$ days.  
Specifically, once the decision $x_{n}$ is made, the model proceeds as described in the intra-day model, providing an accumulated reward of $\tau_{\textrm{upd}}$ days and information on the service level in the individual regions. We assume that demand evolves every $\tau_{\textrm{upd}}$ days, such as every day, every week, or every month (in our experimental study, we assume monthly updates with $\tau_{\textrm{upd}}=30$), and that the demand distribution is characterized by expected demand. Thus, the intra-day exogenous information $\Omega^{\iota}$ can be represented by a set of such demand distributions. We denote the initial distribution $\mathcal{D}^1_{i}$ with expected demand $D_{1,i}$ in Region~$i$ at the beginning of the horizon. Customer demand is endogenous, and throughout the $M$-day horizon, customer demand in a given period depends on the service level that the region has received. There is no update in demand at decision point $t_1$. The first $N-1$ updates to the demand distribution occur right before decision points $t_2$, $\dots$, $t_N$. The last update takes place when the inter-day model terminates.
We let $r_{n-1,i}$ denote Region~$i$'s realized service level $\tau_{\textrm{upd}}$ days before the $n$th update. The function $D_{\text{up}}(\cdot)$ updates the demand distribution for the next $\tau_{\textrm{upd}}$ days as $\mathcal{D}_i^n=D_{\text{up}}(r_{n-1,i}), \ \ n=2,3,\dots,N+1$. Thus, the transition function $S^M$ leads to $S_{n+1} = [n\tau_{\textrm{upd}}+1, D_{n+1,1},\dots,D_{n+1,I}]$ if $n\leq N-1$, and $\Omega_{n+1}^\iota$ consists of all demand distributions whose expected value is $D_{n+1,1},\dots$, or $D_{n+1,I}$. Otherwise, the inter-day model terminates.

\paragraph{Objective and solution.} A solution to the inter-day problem is a sequence of decision rules $x = (X^{x_1}(S_{1}),\dots, X^{x_N}(S_{N}))$, where $X^{x_n}(S_{n})$ represents the intra-day policy selected for the upcoming $\tau_{\textrm{upd}}$ days. The objective function $F$ of a solution $x$ is the expected number of services over the $M$-day horizon:
\begin{equation}\label{eq:inter_day_obj}
    F(x)= \mathbb{E} {\bigg[\sum_{m=1}^{M}\sum_{k=0}^{K} R^\iota(S^\iota_{\tau_m,k},X^{x_{\lceil\tau_m / \tau_{\textrm{upd}}\rceil}}(S_{\tau_m,k}^\iota))|S_{\tau_m,0}^\iota}\bigg].
\end{equation}

The optimal inter-day policy $x^*$ is defined as    $x^*= \argmax_{x_n\in\Pi}\ F(x)$.

\section{Information Shaping for Reinforcement Learning}\label{sec:methodology}

In this section, we present our approach to solving this problem. After a short motivation and overview, we first give a general definition of information shaping as, to the best of our knowledge, the approach is new to the literature. We then present the design of the information shaping approach for our problem.


\subsection{Motivation and Overview}

The considered problem is significantly more complex compared to conventional intra-day problems. The challenges of intra-day decision-making were already highlighted in the example around Figure~\ref{fig:intra-day}. In addition to solving the problem for a specific day, decisions also need to anticipate the demand development over future days (together with future decision-making). As illustrated in Figure~\ref{fig:example_inter}, two different intra-day decision-making strategies lead to very different demand developments. Considering such developments is crucial since optimizing individual days usually leads to a high service level in high demand regions (Region~1 in the example) but poor service in less developed regions (Region~2 in the example) \citep{ulmer2018value}.


In the literature, one prominent way to provide effective decision-making for the intra-day problem is reinforcement learning. Although the existing literature has shown promising results when applying RL to on-demand delivery problems, the literature does not consider long-term demand developments. Furthermore, the literature has shown that for such a problem, RL works well only for a rather ``overseeable" horizon of a few days. As the training horizon is extended, the reward of a decision offers very little information about the quality of the decision \citep{ulmer2018value}. Consequently, preliminary tests of running RL for the full problem did not work. Thus, we introduce a new concept that allows training on the intra-day problem only while considering demand evolution: RL with information shaping. Information shaping is a mechanism for controlling learning through the generation of scenarios for training in reinforcement learning. The goal of information shaping is to guide RL toward effective policies. In our case, the stochastic information of the inter-day process is the demand development. Thus, we analyze the steady state of a simplified version of the inter-day problem to derive properties of optimal expected demand values. We then use these properties to shape the training data for RL policies that are applied only to the intra-day problem. In the context of the problem discussed in this paper, information shaping is therefore used to connect the intra- and inter-day problems.
In the following, we formally introduce information shaping for reinforcement learning and then demonstrate how we apply it to the problem in this paper. 


\subsection{Conceptualizing Information Shaping}\label{sec:shaping}


In general, a sequential decision problem or Markov decision process can be characterized by the set $\{\mathcal{S}, \mathcal{X}, R, \Omega, S^M, F\}$, where $\mathcal{S}$ is the set of all states, $\mathcal{X}$ the decision space, $R$ the reward function that is usually dependent on a particular state and decision, $\Omega$ the exogenous information, $S^M$ the transition function that describes the transition from one state to the next given the current state $S$, decision $x$, and newly realized exogenous information $\omega$. The function $F$ is the objective of the problem.

In theory, we can find the optimal solution to $\{\mathcal{S}, \mathcal{X}, R, \Omega, S^M, F\}$ by using backward induction to solve the Bellman equation. The Bellman equation is given by
\begin{equation}\label{eq:bellman}
V(S) = \max_{x \in \mathcal{X}}\{R(S, x) + \mathbb{E}[V(S^M(S,x))]\}.
\end{equation}
However, many sequential decision problems encounter the ``curses of dimensionality" in the state and decision spaces. In this paper, the intra-day problem alone has such a significant state space that it is impossible to solve the problem exactly. Because of its ability to learn and interact with dynamic environments, RL has gained significant attention in the solution of sequential decision problems, including dynamic routing problems \citep{hildebrandt2023opportunities}. RL offers the advantage that it enables offline policy learning, eliminating the need for real-time optimization, and thus facilitating real-time decision-making. Further, unlike many heuristic approaches in the literature, RL is capable of accounting for both the immediate and future rewards of a Bellman equation. 

RL requires a set of training episodes, with responses that guide the learning. Such guidance could follow from expert opinion, which is the case in HLRL. Alternatively, in training robots, scenarios may come from learning-by-doing as a team at Google's DeepMind did in training robots to play soccer \citep{haarnoja2024learning}. In other cases, however, such real-time learning may be dangerous such as in medication dosing \citep{zadeh2023optimizing} or too expensive, as is the case for the problem in this paper. In these circumstances, the scenarios are generated via simulation. For a general sequential decision problem $\{\mathcal{S}, \mathcal{X}, R, \Omega, S^M, F\}$, these scenarios are simulated by sampling the exogenous information space $\Omega$, generating sample realization $\hat{\omega}$, and coupling that with the transition function $S^M$ to move from a state $S$ to a new state $S^{\prime}$. Information shaping focuses on this case. 



In information shaping, we replace the given information space $\Omega$ with a modified space $\tilde{\Omega}$. We can design $\tilde{\Omega}$ to favor features of states or sample paths that are more likely under ``good" policies, similar to what takes place with oversampling in supervised learning as discussed in Section~\ref{lit:rl}. In this paper, we achieve this effect through control of the demand distributions on which we train. We determine what these distributions should be by characterizing the optimal solution to a simplified version of the problem.

\subsection{Information Shaping for Same-Day Delivery with Endogenous Demand}


The key question in implementing information shaping is how do we design $\tilde{\Omega}$. For the problem studied in this paper, we seek a policy that will guide demand to ``good" demand distributions across the regions while also performing well in that steady state. To achieve this goal, we need to know what the ``good" demand distributions are. We do that by optimizing a steady-state version of the problem. This approach abstracts away from the complexities of demand transition dynamics. In the remainder of this section, we first propose a stylized model of the problem. Then, we characterize the optimal solution of the stylized model in the convergence case and show how we use the insights to shape the information used to train an RL-algorithm.



\subsubsection{Description and analysis of the stylized problem.} \label{sec:shaping_problem}


To make rigorous analysis tractable, we reduce the inter-day problem to a simple control problem where decisions are made about the amount of resources to be allocated to different delivery regions. For simplicity, we consider the case of two regions. Consider two classes of customers $\mathcal{C}_1$ and $\mathcal{C}_2$ such that these classes of customers request service at rates $\lambda_1$ per day and $\lambda_2$ per day, respectively.  We need to determine the rates $r_1, r_2 \in [0,1]$ in the state at which we serve classes $\mathcal{C}_1$ and $\mathcal{C}_2$, respectively. The objective is to maximize $\lambda_1 r_1 +\lambda_2 r_2$ subject to several constraints, especially in the available resources. The problem is complicated by the fact that demand rates $\lambda_1$ and $\lambda_2$ are endogenous and therefore vary positively with $r_1$ and $r_2$, respectively.  We also assume demand saturation, i.e., that the rate of change of $\lambda_1(r)$ and $\lambda_2(r)$ decreases in $r$ and may even be bounded (e.g., there exists an $r$ such that $\lambda(r) = \lambda(r^{\prime})$ for every $r^{\prime} > r$). Further, due to the consolidation effect, the marginal cost of serving a customer decreases as the number of customers served increases, and thus relative resource consumption varies negatively with $r_1 \times \lambda_1$ and $r_2 \times\lambda_2$, respectively. 

To account for this consolidation effect, we assume that the expected time required to serve customers in each region is governed by the well-known Beardwood, Halton, and Hammersley approximation for the TSP \citep{beardwood1959shortest}. This approximation is generally given by $\beta \sqrt{nA}$, where $\beta>0$ is a constant, $n$ is the number of customers being served, and $A$ is the area of the geography being serviced. For our purposes, we assume that $\beta$ and $A$ are given and the same for customer classes $\mathcal{C}_1$ and $\mathcal{C}_2$. Thus, the expected time needed to serve customers in $\mathcal{C}_1$ is $\beta \sqrt{r_1 \lambda_1 A}$ and in $\mathcal{C}_2$ is $\beta \sqrt{r_2 \lambda_2 A}$. 
We are constrained by the total delivery resources $T$, i.e., 
\begin{equation}
   \beta \sqrt{r_1 \lambda_1 A} +  \beta \sqrt{r_2 \lambda_2 A} \leq T.
\end{equation}

To model the relationship between demand and service level for each region, we use a natural log function. This function satisfies that demand increases as the service rate increases and that its second derivative is negative. Specifically, the demand function is in the form 
\begin{equation}
    \lambda_i=log(M_i r_i+1),\ \ \ \ i=1,2.
\end{equation}
The positive constant $M_i$ scales the log curve horizontally over the interval $r_i\in [0,1]$. To avoid negative demand for any service rate between $0$ and $1$, we add $1$ to $M_i r_i$. 

Then, the steady-state problem can be modeled as
\begin{align*}
    \max \ \ & \ \lambda_1 r_1 +\lambda_2 r_2 \\
    \textrm{s.t.} \ \ & \beta \sqrt{r_1 \lambda_1 A} +  \beta \sqrt{r_2 \lambda_2 A} \leq T \\
    & \lambda_i = log(M_ir_i+1)\ \ \ \ \ \ \ i=1,2\\
    & r_1, r_2 \in [0,1].
\end{align*}


A trivial case is when the service provider has enough delivery resources to serve all customers. In this case, the optimal solution is simply to accept all requests, i.e., $r_1=r_2=1$. As mentioned earlier, it is not uncommon in practice that giant service providers like Amazon may accept any on-demand delivery requests. 
In this paper, our analysis focuses on the non-trivial case when the service provider cannot feasibly serve all customers. Nevertheless, our findings will also benefit service providers with adequate resources in managing demand and improving the efficiency of resource allocation in different neighborhoods. In this section, we present the main findings. We refer the reader to Appendix~\ref{appendix:theorem1} for details.

For the non-trivial case, we assume the binding condition of the resource constraint in Proposition~\ref{binding_assumption}.
\begin{proposition}\label{binding_assumption}
    In the non-trivial case, there exists an optimal solution $r_1^*$ and $r_2^*$ where the resource constraint is binding, i.e., $\beta \sqrt{r_1 \lambda_1 A} +  \beta \sqrt{r_2 \lambda_2 A} = T$. 
\end{proposition}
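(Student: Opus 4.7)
My plan is to establish Proposition~\ref{binding_assumption} by a short monotonicity-plus-continuity argument applied to any optimal solution. I would first observe that the feasible set $\mathcal{R} := \{(r_1, r_2) \in [0,1]^2 : \beta\sqrt{r_1 \lambda_1(r_1) A} + \beta\sqrt{r_2 \lambda_2(r_2) A} \leq T\}$ is nonempty (it contains $(0,0)$) and, as the intersection of the compact box $[0,1]^2$ with a closed sublevel set of a continuous function, is compact. The objective $G(r_1, r_2) := r_1 \log(M_1 r_1 + 1) + r_2 \log(M_2 r_2 + 1)$ is continuous on $\mathcal{R}$, so by the Weierstrass extreme value theorem an optimal solution $(r_1^*, r_2^*)$ exists.

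Next, I would argue by contradiction: suppose that at $(r_1^*, r_2^*)$ the resource constraint is strictly slack. The non-trivial case means that $(1,1)$ is infeasible, so $(r_1^*, r_2^*) \neq (1,1)$; without loss of generality $r_1^* < 1$. By continuity of the resource consumption function in $r_1$, I can choose $\varepsilon \in (0, 1 - r_1^*]$ small enough that the perturbed point $(r_1^* + \varepsilon, r_2^*)$ remains feasible. It then suffices to show that this perturbation strictly increases $G$.

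The per-region contribution $g_i(r_i) := r_i \log(M_i r_i + 1)$ has derivative $g_i'(r_i) = \log(M_i r_i + 1) + \frac{M_i r_i}{M_i r_i + 1}$, which is strictly positive for all $r_i > 0$, so $g_i$ is strictly increasing on $(0, 1]$; and even when $r_1^* = 0$ we have $g_1(\varepsilon) > 0 = g_1(0)$, so strict monotonicity holds across the whole interval. Therefore $G(r_1^* + \varepsilon, r_2^*) > G(r_1^*, r_2^*)$, contradicting the optimality of $(r_1^*, r_2^*)$. It follows that every optimal solution satisfies the resource constraint with equality; in particular such an optimal solution exists, which is the claim.

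I do not anticipate a substantive obstacle: the argument is a routine ``push-to-the-boundary'' perturbation that rests on the strict monotonicity of each $g_i$ and the continuity of the resource consumption function. The only small delicacy is the boundary case $r_i^* = 0$, where $g_i'(0) = 0$; this is resolved immediately by noting $g_i(\varepsilon) > 0$ for every $\varepsilon > 0$, so the perturbation still strictly improves the objective.
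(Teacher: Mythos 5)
Your proof is correct and follows essentially the same route as the paper's: a contradiction argument that perturbs one service rate upward while a slack resource constraint leaves room to do so. In fact your version is slightly more careful than the paper's, which computes the objective gain as $\lambda_1\delta$ while treating $\lambda_1$ as constant and thus silently breaks down when $r_1^*=0$ (where $\lambda_1=\log(1)=0$); your observation that $g_1(\varepsilon)=\varepsilon\log(M_1\varepsilon+1)>0$ closes exactly that gap, and the Weierstrass existence step is a sensible addition the paper omits.
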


\begin{proof}[Proof of Proposition \ref{binding_assumption}.] We prove the result using contradiction. Assume that, in the non-trivial case, the optimal solution $r_1^*$ and $r_2^*$ does not consume all delivery resources, i.e., $\beta \sqrt{r_1^* \lambda_1 A} + \beta \sqrt{r_2^* \lambda_2 A} < T$. The corresponding objective value is $\lambda_1 r_1^* +\lambda_2 r_2^*$. Then, consider $\delta >0$ such that $\beta \sqrt{(r_1^*+\delta) \lambda_1 A} + \beta \sqrt{r_2^* \lambda_2 A} < T$ still holds. Note that $\delta$ should also satisfy $r_1^*+\delta\leq 1$. (In the case $r_1^*$ itself is already $1$, we can swap the region indices $1$ and $2$ in the remainder of this proof. The non-trivial case requires that $r_1^*$ and $r_2^*$ are not both $1$.) For this new solution $r_1^*+\delta$ and $r_2^*$, the objective value is $\lambda_1 (r_1^* +\delta)+\lambda_2 r_2^*$. Calculating the difference in objective value, we obtain $$\lambda_1 (r_1^* +\delta)+\lambda_2 r_2^*-(\lambda_1 r_1^* +\lambda_2 r_2^*)=\lambda_1\delta >0.$$ 

Since the new solution $r_1^*+\delta$ and $r_2^*$ has higher objective value than solution $r_1^*$ and $r_2^*$, this contradicts that the solution $r_1^*$ and $r_2^*$ is optimal. Hence, by contradiction, the optimal solution exists when the resource constraint is binding.

\end{proof}

Let the objective be a function denoted by $f(r_1,\lambda_2,r_2,\lambda_2)=\lambda_1 r_1 +\lambda_2 r_2$. With Proposition~\ref{binding_assumption}, we can solve the binding condition of the resource constraint for $r_1\lambda_1$ and substitute it in the objective function. Further, we represent $\lambda_2$ in terms of $r_2$ according to the demand model. Thus, the objective function becomes a function in only $r_2$,
    \begin{equation}\label{main_substitution}
        f(r_2)=\frac{T^2-2T\beta \sqrt{A log(M_2 r_2+1)r_2}+2A\beta^2log(M_2r_2+1)r_2}{A\beta^2},\ 0\leq r_2\leq 1.
    \end{equation}
We then differentiate the objective function with respect to $r_2$ and obtain the derivative
    \begin{equation}\label{obj_derivative_main_body}
        f'(r_2)=\frac{[log(M_2r_2+1)+M_2r_2+log(M_2r_2+1)M_2r_2]\cdot[2\beta\sqrt{A log(M_2r_2+1)r_2}-T]}
        {\beta \sqrt{Alog(M_2r_2+1)r_2}(M_2r_2+1)}.
    \end{equation}
Note that $f(r_2)$ is defined at $r_2=0$ but $f'(r_2)$ is not. Extrema can occur at where $f'(r_2)$ equals zero or where $f(r_2)$ is defined but $f'(r_2)$ is not defined. Therefore, we consider two subsets of $0\leq r_2\leq 1$, i.e., $0< r_2\leq 1$ and $r_2=0$. The analyses lead to Theorem~\ref{theorem:optimal_main_body}. 
    \begin{theorem}\label{theorem:optimal_main_body}
    In the steady state, it can be optimal to invest all delivery resources in one region or equally allocate resources to the two regions. 
    \end{theorem}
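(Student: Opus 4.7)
The plan is to show that any maximizer of the substituted single-variable objective $f(r_2)$ on $[0,1]$ must lie either at the boundary $r_2=0$ (corresponding to putting all resources into Region~1) or at an interior critical point that forces equal resource consumption between the two regions. A symmetric argument handles $r_2=1$. Because Proposition~\ref{binding_assumption} has already reduced us to the binding resource constraint, the characterization of these candidate optima is exactly the dichotomy stated in Theorem~\ref{theorem:optimal_main_body}.

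First I would inspect the factorization of $f'(r_2)$ in~(\ref{obj_derivative_main_body}). The denominator $\beta\sqrt{A\log(M_2 r_2+1)r_2}(M_2 r_2+1)$ is strictly positive on $(0,1]$. The first factor of the numerator, $\log(M_2 r_2+1)+M_2 r_2+\log(M_2 r_2+1)M_2 r_2$, is a sum of nonnegative terms and is strictly positive for every $r_2>0$ since $\log(M_2 r_2+1)>0$ there. Hence on the open interval $(0,1)$, $f'(r_2)=0$ is equivalent to the vanishing of the second bracket, i.e.
\begin{equation*}
2\beta\sqrt{A\log(M_2 r_2+1)\,r_2}=T,
\end{equation*}
which by the definition $\lambda_2=\log(M_2 r_2+1)$ is precisely $\beta\sqrt{A r_2\lambda_2}=T/2$. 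Combined with the binding resource constraint, this forces $\beta\sqrt{A r_1\lambda_1}=T/2$ as well, so an interior critical point is exactly the ``equal-allocation'' solution.

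Next I would treat the boundary $r_2=0$, where $f$ is defined (by continuous extension of the substitution) but $f'$ is not. Plugging $r_2=0$ back into the binding constraint yields $\beta\sqrt{A r_1\lambda_1}=T$, i.e., all resources are invested in Region~1; this is the ``one-region'' corner. The endpoint $r_2=1$ is handled symmetrically: if it is optimal, then by re-indexing the two regions the symmetric analysis places us again at an interior critical point or at the other one-region corner. Since $f$ is continuous on the compact set $[0,1]$, the maximum is attained, and by the argument above any maximizer is either $r_2\in\{0,1\}$ (one region receives everything) or an interior point satisfying equal resource consumption. This establishes Theorem~\ref{theorem:optimal_main_body}.

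The main obstacle is the bookkeeping around the non-differentiable endpoint $r_2=0$ and ensuring that no spurious critical points are introduced by the substitution. In particular, one must verify that the first bracket is genuinely nonzero on $(0,1]$ (so that the only interior critical points are equal-allocation ones) and that the extension of $f$ to $r_2=0$ agrees with the original two-variable problem evaluated at $r_2=0$, $r_1$ chosen to saturate the budget. Once those continuity and sign checks are in place, the dichotomy follows directly from first-order analysis on the compact interval, and the remaining work (deferred to Appendix~\ref{appendix:theorem1}) is to describe which of the two regimes is optimal as a function of the parameters $T,\beta,A,M_1,M_2$.
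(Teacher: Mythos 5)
Your proposal is correct and follows essentially the same route as the paper: reduce to the single-variable objective $f(r_2)$ via the binding constraint, observe that the first factor of the numerator of $f'$ is strictly positive on $(0,1]$ so interior critical points satisfy $\beta\sqrt{A r_2\lambda_2}=T/2$ (equal allocation), and treat $r_2=0$ as the all-in-one-region corner. Your extra care about the endpoint $r_2=1$ and compactness is a small tidying of the paper's argument rather than a different approach.
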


Figure~\ref{fig:shaping} illustrates the general idea of Theorem~\ref{theorem:optimal_main_body} for the case of two equal regions in a stylized manner. There are two options: either the maximum number of services can be reached by setting $r_1=1$ (or $r_1=0$), or, when setting $r_1=r_2=0.5$. To prove Theorem~\ref{theorem:optimal_main_body}, we first prove a lemma for the interval $0< r_2\leq 1$.

    \begin{lemma}\label{lemma:main_excluding_0}
    Assume the service level $r_2$ is in the interval $(0,1]$. Then, in the steady state, it can be optimal to allocate half of the delivery resources to each region.
    \end{lemma}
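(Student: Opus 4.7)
The plan is to use the first-order condition on $f(r_2)$ from (\ref{obj_derivative_main_body}) and show that on $(0,1]$ the only stationary point corresponds to splitting the total delivery budget equally between the two regions.

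First, I would dispose of the ``easy'' factors in the derivative expression. On $(0,1]$ with $M_2>0$, the first bracketed factor $\log(M_2 r_2+1)+M_2 r_2+\log(M_2 r_2+1)M_2 r_2$ is a sum of three strictly positive terms, and the denominator $\beta\sqrt{A\log(M_2 r_2+1)r_2}(M_2 r_2+1)$ is likewise strictly positive. Consequently, $f'(r_2)=0$ on $(0,1]$ reduces to the vanishing of the remaining factor, $2\beta\sqrt{A\log(M_2 r_2+1)r_2}-T=0$, i.e.\ $\beta\sqrt{A\lambda_2 r_2}=T/2$.

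Second, I would translate this algebraic identity into a statement about resource allocation. By the Beardwood--Halton--Hammersley approximation used in the model, the quantity $\beta\sqrt{A\lambda_2 r_2}$ is exactly the expected time required to serve the demand accepted from region~$2$, i.e.\ its resource consumption. The critical-point condition therefore states that region~$2$ absorbs exactly $T/2$ of the delivery budget. Invoking Proposition~\ref{binding_assumption}, which guarantees that the resource constraint binds at an optimum, the remaining $T/2$ must be allocated to region~$1$. Hence the unique stationary point of $f$ on $(0,1]$ describes an equal split of delivery resources across the two regions.

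Since any interior optimum of $f$ on $(0,1]$ must satisfy the first-order condition and the only candidate is this equal-split point, it is the sole interior candidate for optimality on $(0,1]$; combined with the separate treatment of the boundary case $r_2=0$ that is needed for Theorem~\ref{theorem:optimal_main_body}, this delivers the lemma's claim that an equal allocation of delivery resources can be optimal in the steady state. The step I expect to be the main obstacle is not the algebraic differentiation, which is routine once (\ref{obj_derivative_main_body}) is accepted, but the interpretation layer: recognizing $\beta\sqrt{A\lambda_2 r_2}$ as the region-$2$ resource consumption and then leveraging Proposition~\ref{binding_assumption} to infer the complementary $T/2$ allocation for region~$1$. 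The entire conceptual content of the lemma lives in that resource-budget reading of the first-order condition.
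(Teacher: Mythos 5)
Your proposal is correct and follows essentially the same route as the paper: both arguments observe that the denominator and the first bracketed factor of $f'(r_2)$ are strictly positive on $(0,1]$, reduce the first-order condition to $2\beta\sqrt{A\log(M_2r_2+1)r_2}=T$, and read $\beta\sqrt{A\lambda_2 r_2}=T/2$ as region~2 consuming exactly half the delivery budget. Your explicit appeal to Proposition~\ref{binding_assumption} to assign the complementary $T/2$ to region~1 is a small clarification the paper leaves implicit, but it does not change the argument.
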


    \begin{proof}{Proof of Lemma \ref{lemma:main_excluding_0}.}
    
    The denominator in Equation~\ref{obj_derivative_main_body} is nonnegative for all $0<r_2\leq 1$. Setting the numerator equal to $0$ to find any critical points, we get 
    \begin{equation}\label{main_first_critical}
        log(M_2r_2+1)+M_2r_2+log(M_2r_2+1)M_2r_2=0.
    \end{equation}
    \begin{equation}\label{main_second_critical}
        \textrm{or\ \ \ }\ 2\beta\sqrt{A log(M_2r_2+1)r_2}-T=0.
    \end{equation}
    Equation~\ref{main_first_critical} has no solution because all three terms on the left-hand side are strictly greater than $0$ when $0<r_2\leq 1$. Rearranging the terms in Equation~\ref{main_second_critical} gives us
    \begin{equation}\label{main_half_resource}
        \beta\sqrt{A log(M_2r_2+1)r_2}=\frac{T}{2}.
    \end{equation}
    Because of the demand equation $\lambda_2=log(M_2r_2+1)$, Equation~\ref{main_half_resource} becomes
    \begin{equation}\label{main_half_resource_2}
        \beta\sqrt{A\lambda_2 r_2}=\frac{T}{2}.
    \end{equation}    
    The left-hand side is the delivery resources the service provider invests in region $\mathcal{Z}_2$. The right-hand side is exactly half of the total delivery resources available to the service provider. Hence, a critical point occurs when half of delivery resources are allocated in each region.
    \end{proof}

 
    Lemma~\ref{lemma:main_excluding_0} identifies the situation when a critical point $r\textsubscript{critical}$ occurs in the interval $0<r_2\leq 1$. Then, we include $r_2=0$ in our analysis. When $r_2=0$, the service provider invests all delivery resources $T$ in the other region $\mathcal{Z}_1$, not necessarily accepting all requests. (Note that the TSP approximation $\beta\sqrt{A\lambda_1 r_1}$ represents the delivery resources invested, e.g., time, while $\lambda_1 r_1$ represents the actual number of services offered.) At $r_2=0$, the objective value is $f(0)=\frac{T^2}{A\beta^2}$, which can be maximal if $f(0)>f(r\textsubscript{critical})$. Because the exact expression of $r\textsubscript{critical}$ is lacking, we can only conclude that it is possible that the optimal solution is $r_2=0$. Finally, the above analyses complete the proof for Theorem~\ref{theorem:optimal_main_body}.

Theorem~\ref{theorem:optimal_main_body} helps us identify a candidate set of optimal solutions. While we acknowledge that we cannot determine all possible solutions and may not know which ones are truly optimal, we can identify some potential values. Further, Theorem~\ref{theorem:optimal_main_body} provides a structured approach to narrowing down these candidates, allowing us to focus on the most promising ones in the solution set. 


If we assume that it is optimal to equally allocate delivery resources, Corollary~\ref{optimal_ratio_theorem_main_body} identifies that the ratio of service levels should vary inversely with the ratio of demands. That is, the relatively more requests in a region, the relatively lower the service level. The proof is presented in~\ref{sec:corollary_proof}.

\begin{corollary}\label{optimal_ratio_theorem_main_body}
    When it is optimal to equally allocate delivery resources in the steady state, the optimal decisions $r^*_1$ and $r^*_2$ satisfy $\frac{r^*_1}{r^*_2} = \frac{\lambda_2}{\lambda_1}$.
\end{corollary}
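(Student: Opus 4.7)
The plan is to leverage the intermediate equation that already appears in the proof of Lemma~\ref{lemma:main_excluding_0}, namely Equation~\ref{main_half_resource_2}, and apply it symmetrically to both regions. The corollary's hypothesis is that equal allocation of resources is optimal, so I will translate ``equal allocation'' into the statement that each region consumes exactly $T/2$ units of the delivery resource budget under the BHH approximation. From there the desired ratio drops out of a one-line algebraic manipulation.

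Concretely, I would proceed as follows. First, invoke Proposition~\ref{binding_assumption} to assert that the resource constraint is tight at the optimum, so $\beta\sqrt{A\lambda_1 r_1} + \beta\sqrt{A\lambda_2 r_2} = T$. Second, reuse the argument that produced Equation~\ref{main_half_resource_2} in Lemma~\ref{lemma:main_excluding_0} to conclude that at the equal-allocation critical point we have $\beta\sqrt{A\lambda_2 r_2} = T/2$; by the symmetry of the model in the two region indices (the derivation in the lemma did not privilege Region~2 in any structural way, it simply substituted the binding constraint for one of the two terms), the analogous identity $\beta\sqrt{A\lambda_1 r_1} = T/2$ must also hold at the hypothesized optimum. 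Third, equate these two expressions, square both sides to remove the radicals and the common factors $\beta^2$ and $A$, and obtain $\lambda_1 r_1 = \lambda_2 r_2$. Rearranging gives $r_1^*/r_2^* = \lambda_2/\lambda_1$, which is precisely the corollary's claim.

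The only step that requires a brief justification rather than pure calculation is the symmetry argument in the second step: I need to be explicit that ``equal allocation of delivery resources'' is interpreted in terms of the BHH time budget $\beta\sqrt{A\lambda_i r_i}$ (what the constraint actually measures), not in terms of the raw service rates $r_i$ or the served volumes $\lambda_i r_i$. Once that interpretation is fixed, the derivation is routine and the squaring step introduces no spurious roots because both sides are nonnegative. I would therefore write the proof as a short three-line display: state the two half-budget equalities, square and cancel, and rearrange, with a one-sentence remark pointing back to Equation~\ref{main_half_resource_2} for the derivation of each equality.
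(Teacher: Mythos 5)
Your proposal is correct and follows essentially the same route as the paper: the paper likewise obtains $\log(M_2 r_2^*+1)r_2^* = T^2/(4A\beta^2)$ from the critical-point condition (the squared form of your half-budget equality), invokes symmetry in the region indices to get the analogous identity for Region~1, and equates the two to conclude $\lambda_1 r_1^* = \lambda_2 r_2^*$. The only cosmetic difference is that you work with the unsquared identities $\beta\sqrt{A\lambda_i r_i}=T/2$ and square at the end, whereas the paper manipulates the squared form directly.
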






\subsubsection{From theory to practice.}\label{sec:information_shaping}

The analytical insights from the previous section reveal that either resources should be distributed equally across the regions or some regions should be prioritized over others. In this section, we explain how we incorporate these insights into the training process of our RL method.

For each of the two cases, we create a RL policy. The algorithmic procedures of both policies are the same but they differ in how we create the training data via information shaping. In our case, the training data comprises daily demand realizations when training the RL policies. This is captured by the expected demand of each region and then the resulting distribution, e.g., a Poisson process to generate requests according to the expected demand value. Until now, RL is trained with one fixed expected demand value. To allow for capturing demand evolution and guiding our policy toward desired (inter-day) states, in our case, we define the training data by a distribution of expected demand values for each region. Each expected value defines a different demand distribution. We use distributions of expected values to cover the demand evolution. We further add the expected demand values for each region as features to the RL policy.

Mathematically, let $\mathbb{E}\widehat{\mathcal{D}}_i$ be the expected demand value for Region~$i$. Then, we create the expected demand distribution as $\mathcal{N}(\mathbb{E}\widehat{\mathcal{D}}_i,0.5\mathbb{E}\widehat{\mathcal{D}}_i)$, i.e., normally distributed around the mean with a coefficient of variation of $0.5$ to cover a broader range of demand distributions. Now, the question arises about how to set the \textit{expected} expected demand values $\mathbb{E}\widehat{\mathcal{D}}_i$ for each region. 

For the policy that distributes resources equally, denoted IRL-E, we create the same expected demand distribution for all regions. The \textit{expected} expected demand is set to the average initial demand of all regions:

\begin{equation}
    \mathbb{E}\widehat{\mathcal{D}}_i= |I|^{-1 }\sum\limits_{i \in I} \mathcal{D}^0_i.
\end{equation}
The outcome is illustrated by the blue curve in Figure~\ref{fig:shaping_d} for the two-region case. Both regions have the same expected expected demand and the same distribution of expected demand values. For the policy that prioritizes regions, denoted IRL-P, in theory, we would set demand for the prioritized regions high and for the non-prioritized regions to zero. However, to model demand evolution, we create a slightly softer ``80-20" distribution where we set the expected expected demand for priority regions to be four times higher than the demand for non-priority regions. The overall expected demand to distribute is again the sum of initial expected demand. Regions to be prioritized are either regions with higher initial demand than average or regions that are closer to the warehouse. The procedure is shown via the orange-colored curves in Figure~\ref{fig:shaping_d} for the two-region case. The distribution for non-priority Region~2 is closer to the minimum expected demand value while the distribution for priority Region~1 is closer to a potentially maximum expected demand value.
We further see that the variance for the latter is higher to allow evolution toward the high demand values.



\begin{figure}[t]
    \centering
	\begin{minipage}[b]{0.485\linewidth}
		\centering
		\includegraphics[width=\linewidth]{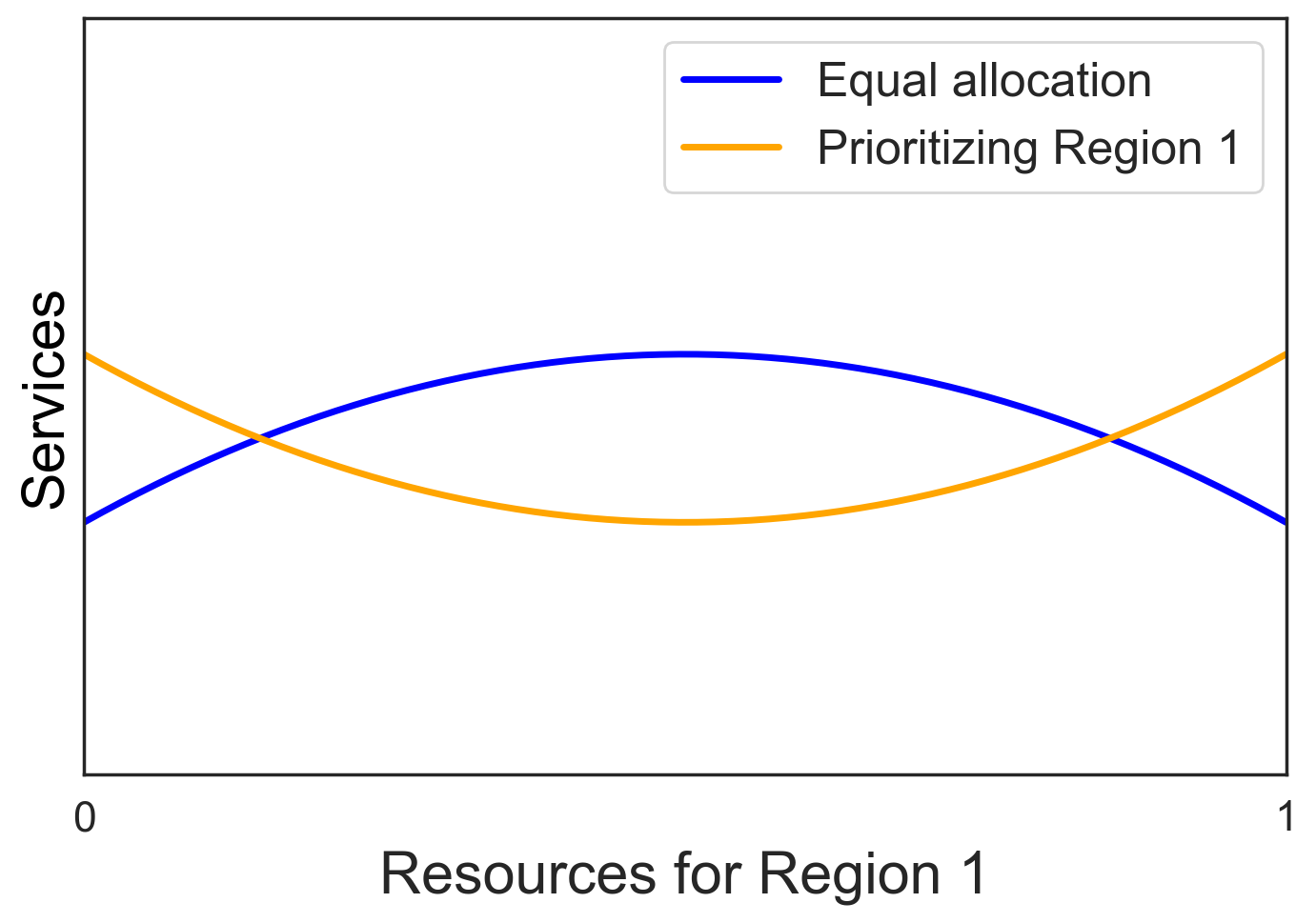}
		\subcaption{Objective value vs. resources invested in Region 1.}
            \label{fig:shaping}
        \end{minipage}
        \begin{minipage}[b]{0.495\linewidth}
		\centering
        \includegraphics[width=\linewidth]{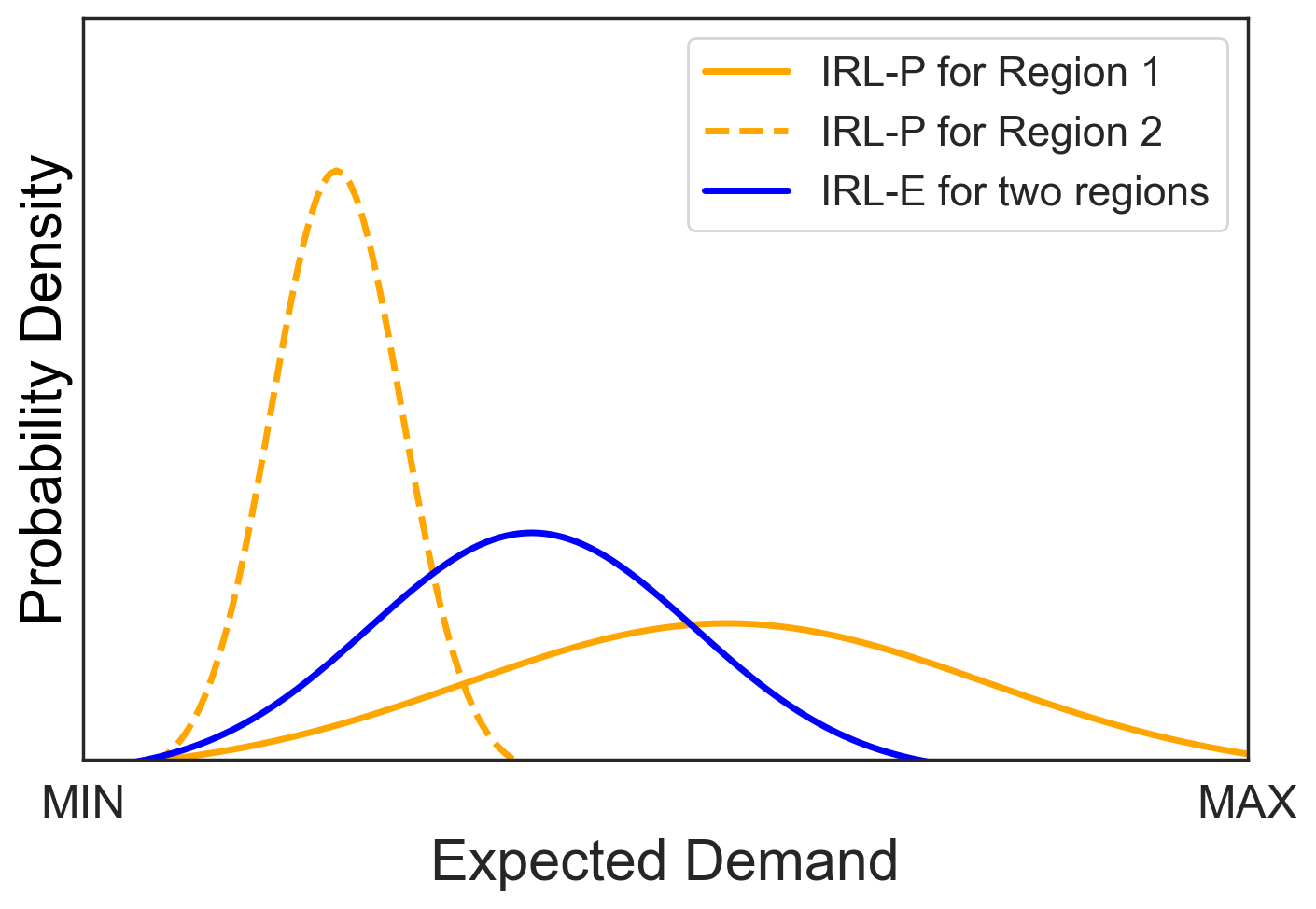}
		\subcaption{Demand distribution vs. different strategies.}	
    \label{fig:shaping_d}
        \end{minipage}
        
	\caption{Illustration of information shaping applied to two different strategies.}
        
\end{figure}

Finally, it is worth noting that our method does not simply train with different distributions. Rather, we shape the training information to connect the intra- and inter-day problems, resulting in only one policy. This policy is sensitive to the demand distribution of the inter-day state. For implementation details, we refer the reader to~\ref{sec:additional_details}.

\section{Experiments and Implications}\label{sec:experiments}
In this section, we present our experiments and analyze key dimensions of the problem.


\subsection{Experimental Design}

We first describe the basic settings, demand evolution models, and benchmark policies.

\subsubsection{Basic settings.}

\paragraph{Intra-day problem.} The service provider receives customer requests for $T=7$ hours. The arrival of requests follows a Poisson process. If a request is accepted for service, it must be served within $\bar{\delta}=4$ hours. The provider employs a fleet of $P=5$ vehicles. These vehicles operate for $T_\mathcal{V}=8$ hours, one additional hour after the service provider stops receiving requests. Their travel speed is $30$km/h. To account for road distances and traffic, we convert Euclidean distances into travel times using the approach in \citet{boscoe}. To handle the real-time routing, we implement an insertion heuristic, similar to those found in \cite{azi,chen2023same}. Vehicles are assumed to be uncapacitated because of mostly small delivery items and limited stops per route. The loading time at the warehouse and service time at a customer are $t_W=3$ minutes and $t_\mathcal{C}=3$ minutes, respectively.


\paragraph{Geography.} We design three geographies of the service area, shown in Figure~\ref{fig:geography}. Two geographies are symmetrical with respect to regions and the warehouse and differ in the initial demand per region, and one geography exhibits the same initial demand but different distances to the warehouse. These geographies are defined as follows:
\begin{figure}[h]
    \centering
	\begin{minipage}[b]{0.32\linewidth}
		\centering
		\includegraphics[width=\linewidth]{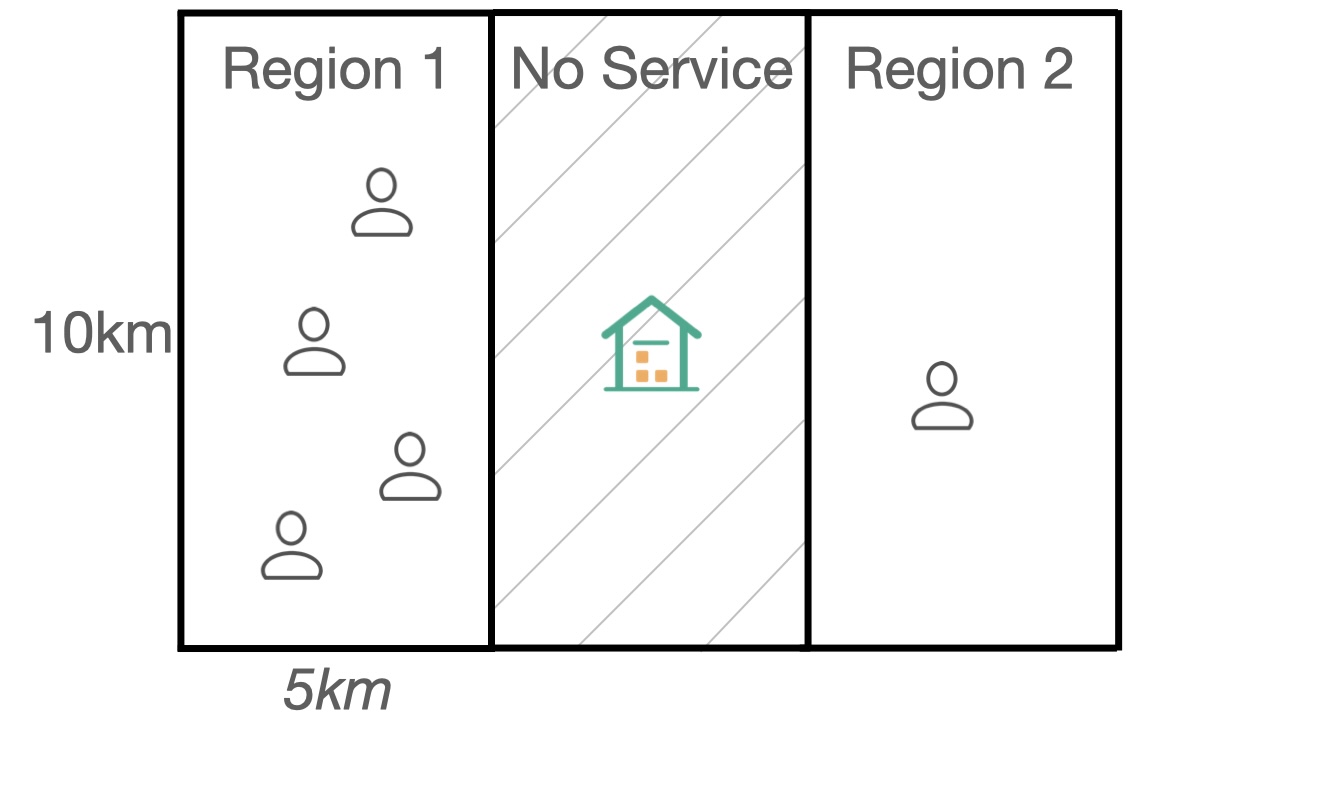}
		\subcaption{}
        \end{minipage}
	\begin{minipage}[b]{0.32\linewidth}
		\centering
		\includegraphics[width=\linewidth]{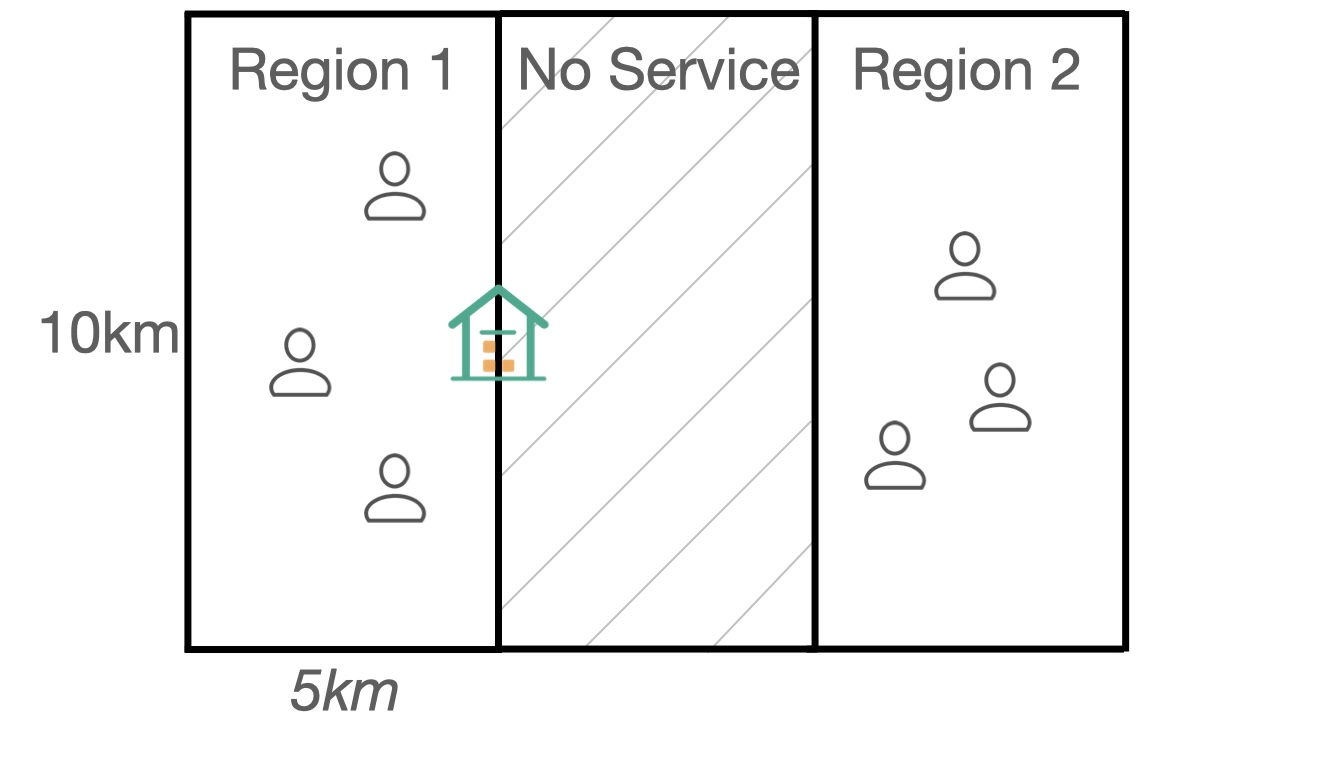}
		\subcaption{}	
        \end{minipage}
        \begin{minipage}[b]{0.31\linewidth}
		\centering
		\includegraphics[width=\linewidth]{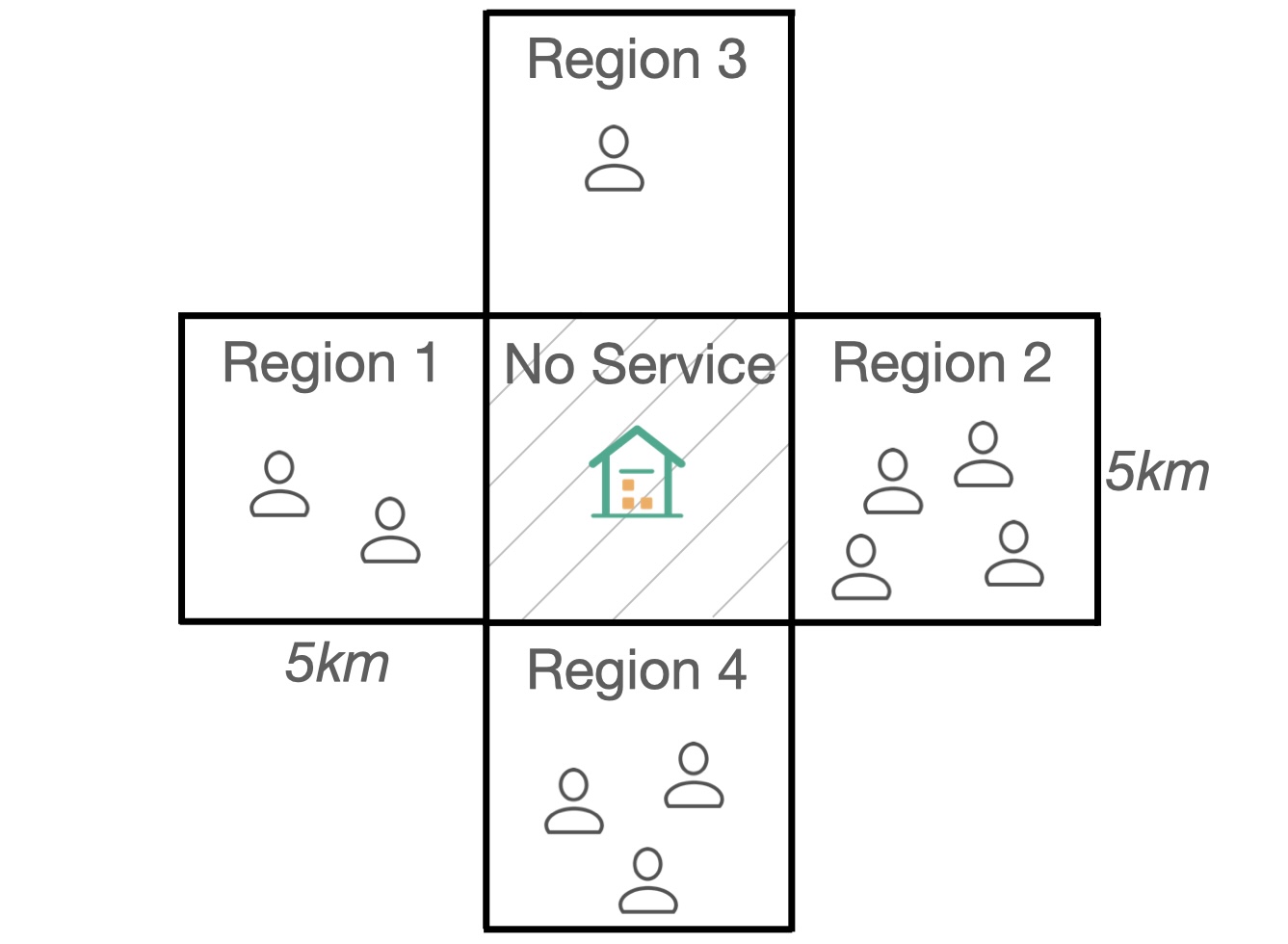}
		\subcaption{}	
        \end{minipage}
	\caption{Geography of the service area.}
	\label{fig:geography}
\end{figure}
 

\begin{enumerate}[label=(\alph*),leftmargin=*]
    \item  In this geography, regions differ in the initial demand for services. There are two regions, Region~1 and Region~2. We create a \textit{No Service} area between the two regions, e.g., a business district. The warehouse is located in the center of the \textit{No Service} area. On day~1, the two regions begin with $D_{1,1}=200$ and $D_{1,2}=50$ expected requests per day, respectively. The $x$- and $y$-coordinates of customers are first generated from independent and identical normal distributions with a mean of $5$km and a standard deviation of $3$km. Then, we adjust the $x$-coordinate to reflect the \textit{No Service} area. This is done by adding $5$km to the $x$-coordinate of customers in Region~2. Note that due to the standard deviation of $3$km, there may be customers residing slightly outside the boxes.  
    \item We consider a setting with equal initial demand but different distances to the warehouse. The warehouse is located on the right border of Region~1. The coordinates of customers are generated using the same method as in~(a). Different from the previous geography, both regions in~(b) have the same initial demand $D_{1,1}=D_{1,2}=125$.    
    \item This geography is similar to~(a), where regions differ in initial demand. In this geography, we consider four regions that initially start with $D_{1,1}=50$, $D_{1,2}=100$, $D_{1,3}=25$, and $D_{1,4}=75$ expected requests per day, respectively. Because the size of each region is smaller than that in~(a), we use a uniform distribution over $0$km to $5$km when generating the $x$- and $y$-coordinates of customers.
\end{enumerate}


\subsubsection{Demand models.}\label{sec:training_and_evalluation}

We consider a two-year horizon ($M=720$ days) and update each region's expected demand every $\tau_{\textrm{upd}}=30$ days. Then, the $24$ demand updates occur at the end of days~30, 60, \dots, 720. At each update, to obtain the realized service level for a region, we divide the total number of services provided in that region by the total number of requests received from that region in the last 30 days (see~\ref{sec:additional_details} for details). 

For demand evolution, we consider two models, a capacitated model and an uncapacitated model. 

\paragraph{Capacitated model.} This model assumes a maximum demand level $D_{\text{cap},i}$ in Region~$i$, expressed as 
    $D_{n,i}=(1-\alpha)\cdot D_{n-1,i}+\alpha\cdot D_{\text{cap},i}\cdot r_{n-1,i}$. Note that $\mathcal{D}_{i}^n$ represents a distribution of demand, which is characterized by the expected value $D_{n,i}$. This model is similar to exponential smoothing forecast models. The parameter $\alpha\in(0,1)$ indicates the customer ``sensitivity", i.e., how much weight is put on the latest realized demand. A higher value of $\alpha$ puts more weight on the last month. Since both $\alpha$ and $r_{n-1,i}$ cannot exceed values of one, the maximum demand is capped by $D_{\text{cap},i}$.

\paragraph{Uncapacitated model.} This model operates with a service-level threshold, denoted as $\bar{r}$. In this model, the expected demand level for a region is updated as   $D_{n,i}=D_{n-1,i} + D_{n-1,i}\cdot (r_{n-1,i}-\bar{r})$. The use of threshold $\bar{r}$ is similar to the \textit{reference point} or \textit{customer expectation} in the literature \citep{mathies2011role}. A service level above the threshold attracts more customers while a value below results in churn. We note that while this demand model is theoretically uncapacitated, growing demand infinitely is practically impossible because the resources to satisfy demand are limited. More demand likely leads to lower service levels and eventually a convergence toward $\bar{r}$.


The two demand models differ primarily in two aspects. First, the capacitated model can only approach a certain demand level $D_{\text{cap},i}$, whereas the uncapacitated model has the potential to theoretically increase infinitely. Second, the capacitated model incorporates long-term memory, as it calculates the new expected demand using all previous demand levels and historical service levels with varying weights. Conversely, the uncapacitated model has short-term memory, which considers the information only since the last update. We use the following values for the evaluation parameters, $\alpha\in\{0.25, 0.5, 0.75\}$ and $\bar{r}\in\{0.5, 0.55, 0.6, 0.65, 0.7, 0.75, 0.8, 0.85\}$. We use a value of $D_{\text{cap},i}=250$ for each of the two regions in Geographies (a) and (b), and $D_{\text{cap},i}=125$ for each of the four regions in Geography~(c). 

In total, we obtain 33 different experimental settings, 11 for each geography. For each setting, we evaluate a policy 100 times with different demand realizations in the first month. (Demand realizations in later months are naturally different due to endogenous demand models.)

\subsection{Benchmark Policies}

In addition to IRL-E and IRL-P, we implement five benchmark policies (see~\ref{appendix:benchmark} for details):


\begin{itemize}[leftmargin=*]
    \item \textit{Myopic policy.} The myopic policy accepts a customer request whenever it is feasible to serve the customer. When there are multiple vehicles that can feasibly make the delivery, the service provider assigns the delivery to the vehicle with the smallest increase in travel time.
    \item \textit{Intra-day policy.} The intra-day policy is similar to an existing method from the literature \citet{deepQ}. Without consideration of demand evolution, the intra-day policy maximizes the number of services provided merely on a daily basis. In all training instances, the expected demands are constant at 200 and 50 for two regions and 125 for all four regions. 
    \item \textit{Bucket policy.} This policy can be seen as action shaping. Motivated by Theorem~\ref{theorem:optimal_main_body}, the bucket policy seeks to balance the delivery resources allocated to regions. In a day $\tau_m$, the service provider accepts at most $\frac{D_{\tau_m,1}+\dots+D_{\tau_m,I}}{I}$ requests in each of the $I$ regions. Unless a region reaches this limit, the service provider accepts every request that can be feasibly fulfilled. This policy does not need training either. 
    \item \textit{Reward-shaping policy (RRL).} In the literature, reward shaping is designed to incorporate human domain knowledge into the learning process to speed up the convergence rate by modifying the reward used during training \citep{ng1999policy}. This policy differs from the intra-day policy in that, during training, we assign customers different values of reward. If the maximum expected demand across all regions is $D_{\max}$, then, motivated by Corollary~\ref{optimal_ratio_theorem_main_body}, the reward for a customer from Region~$i$ is $\frac{D_{\max}}{D_{1,i}}$. 
    \item \textit{Manipulated-reward policy (MRL).} This policy serves as a hybrid of the intra-day and reward-shaping policies. We train the policy the same way we train the intra-day policy. So, every customer has a reward of~1 during training. When implementing the trained policy, we artificially manipulate the reward the same way we do with the reward-shaping policy. In a day $\tau_m$, if expected demands for the regions are $D_{\tau_m,1}, \dots, D_{\tau_m,I}$, respectively, then the reward of a customer from Region~$i$ is $\frac{D_{\max}}{D_{\tau_m,i}}$. 
\end{itemize}

\subsection{Objective Value}

We first compare the objective values of the policies. For a better comparison, we use the intra-day policy as the base benchmark. For each instance setting, we calculate the average number of customers served per day over the two-year horizon. We then compute the improvement of a policy over the intra-day policy as 
    $\frac{\text{Policy}-\text{Intra-Day Policy}}{\text{Intra-Day Policy}}$. In addition, we also calculate the average improvement if the best IRL policy is selected in a particular instance setting, denoted by IRL (Best). Finally, we calculate the average improvement over all 33 instance settings. Figure~\ref{fig:services_value_of_shaping_by_policy} presents the results by dark (blue-colored) bars, sorted by the value of improvements. We observe that the intra-day policy outperforms all other benchmark policies, even the ones that anticipate demand evolution via action shaping (the bucket policy) or reward shaping (RRL, MRL). Thus, optimizing each day individually is a valid option compared to alternative strategies:
\begin{insight}\label{ins:intra_day}
    High performance intra-day optimization is not necessarily ineffective even in the presence of inter-day developments because it leads to higher service levels and can grow demand implicitly. 
\end{insight}

However, our proposed policies improve upon the intra-day policy by $7.1\%$ for IRL-E, $19.1\%$ for IRL-P, and even $21.4\%$ when the best IRL policy is chosen. This leads to our second insight:
\begin{insight}\label{ins:inter_day}
    When executed effectively, strategic demand development via inter-period anticipation can be a powerful tool to enhance business and grow revenue in the longer run. Whether regions should be prioritized or resources should be distributed more evenly over the regions depends on the instance setting considered.
    
\end{insight}

\begin{figure}[t]
    \centering
    \captionsetup[subfigure]{width=1\linewidth}
	\begin{subfigure}[b]{\linewidth}
		\centering
		\includegraphics[width=\linewidth]{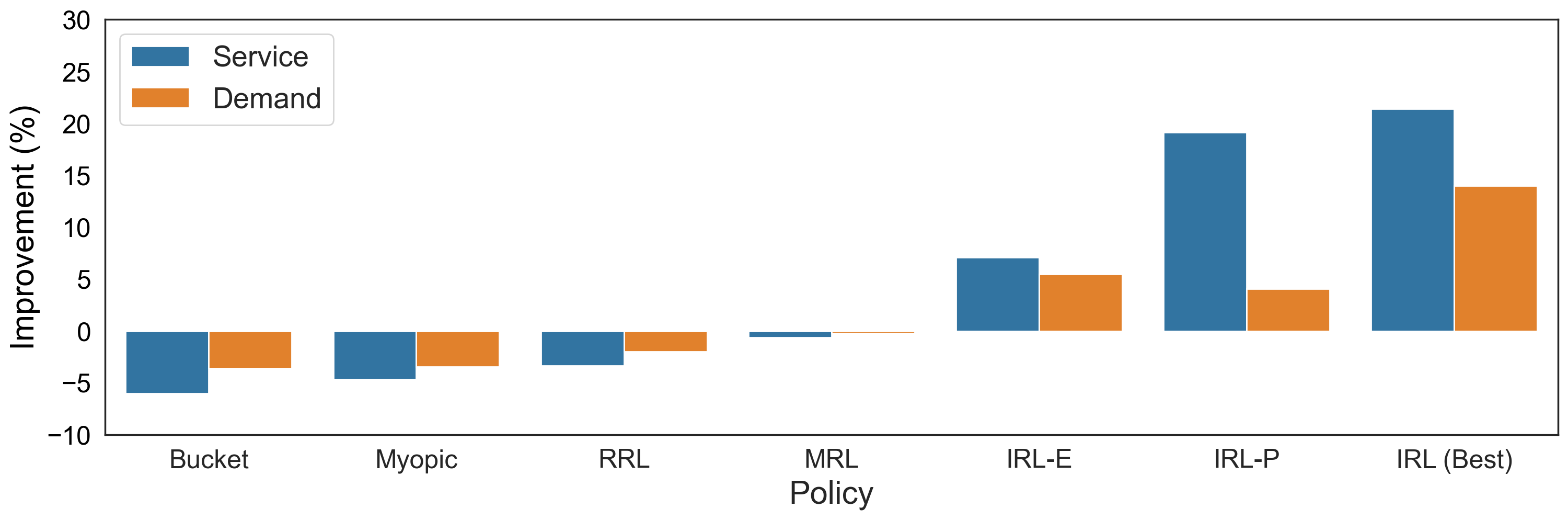}
	\end{subfigure}
	\caption{Average improvement of services provided relative to the intra-day policy over all settings.}
        \label{fig:services_value_of_shaping_by_policy}
\end{figure}



We further observe that the reward-shaping methods are inferior to the intra-day policy. Thus, at least for our problem setting, reward-shaping is not an effective means of controlling demand strategically.

Our main objective is to maximize the total number of services provided. Still, another important aspect is the expected demand at the end of the time horizon. The light (orange-colored) bars in Figure~\ref{fig:services_value_of_shaping_by_policy} show the average improvement in end-of-horizon demand. Detailed results are presented in~\ref{appendix:tables}. We observe that our IRL policies improve demand by $5.4\%$ for IRL-E, $4.0\%$ for IRL-P, and even by $14.0\%$ when the best IRL policy is chosen. This leads to the third insight:

\begin{insight}\label{ins:regiona}
    Effective strategic demand development improves not only service levels and revenue during the planning horizon but also the customer base for future operations.
\end{insight}


While both information shaping approaches lead to significant improvements, their performance varies in different individual experimental settings. We will investigate this in the following. Grouped by geography, Table~\ref{table:results} shows the average number of daily services for each policy and for the 33 experimental settings. The best value per instance is highlighted in bold. Note that Geography~(b) has the same initial demand in the two regions, so RRL is identical to the intra-day policy in that case. 


\begin{table}[t]
    \centering
    \addtolength{\tabcolsep}{0.5pt}
    \setlength\extrarowheight{-50pt}
    \begin{tabular}{@{}>{}l*{10}{l}@{}}
    \toprule[1pt]
        Demand model & $\alpha$ & $\bar{r}$ & IRL-E & IRL-P & Intra-day & Myopic & Bucket & RRL & MRL\\ 
        \midrule
        & & & \multicolumn{7}{c}{Geography (a)}\\ 
        \midrule
        Capacitated & 0.25 &- & \textbf{288.9} & 250.6 & 262.3 & 246.5 & 243.3 & 255.0 & 261.7\\ 
        & 0.5 &- & \textbf{289.9} & 251.7 & 262.4 & 246.9 & 244.1 & 254.9 & 262.2\\ 
        & 0.75 &- & \textbf{290.1} & 252.4 & 262.6 & 247.1 & 244.4 & 254.9 & 262.5\\ 
        Uncapacitated &- & 0.5 & 281.9 & \textbf{368.2} & 285.3 & 267.5 & 261.7 & 276.4 & 283.1\\ 
        &- & 0.55 & 282.9 & \textbf{366.9} & 281.3 & 263.4 & 257.1 & 270.7 & 283.1\\ 
        &- & 0.6 & 280.2 & \textbf{358.0} & 276.8 & 259.2 & 252.0 & 266.0 & 273.8\\ 
        &- & 0.65 & 278.0 & \textbf{349.0} & 272.7 & 254.4 & 246.3 & 262.2 & 269.3\\ 
        &- & 0.7 & 277.9 & \textbf{340.4} & 268.9 & 249.7 & 240.2 & 259.0 & 265.0\\ 
        &- & 0.75 & 280.5 & \textbf{331.6} & 265.8 & 245.0 & 233.7 & 255.8 & 261.2\\ 
        &- & 0.8 & 281.2 & \textbf{323.8} & 263.0 & 240.8 & 226.4 & 252.0 & 257.9\\ 
        &- & 0.85 & 277.6 & \textbf{315.7} & 260.4 & 236.6 & 217.8 & 247.7 & 254.4\\ 
        \midrule
        & & & \multicolumn{7}{c}{Geography (b)}\\ 
        \midrule  
        Capacitated & 0.25 &- & 288.3 & \textbf{290.5} & 266.1 & 248.7 & 248.7 & 266.1 & 266.09 \\
        & 0.5 &- & 289.1 & \textbf{295.8} & 266.8 & 249.5 & 249.5 & 266.8 & 266.76 \\
        & 0.75 &- & 289.2 & \textbf{297.3} & 267.0 & 249.7 & 249.7 & 267.0 & 266.95 \\
        Uncapacitated &- & 0.5 & 304.5 & \textbf{425.6} & 289.4 & 272.2 & 272.1 & 289.4 & 287.34 \\
        &- & 0.55 & 316.2 & \textbf{435.4} & 284.7 & 267.8 & 267.7 & 284.7 & 282.81 \\
        &- & 0.6 & 317.9 & \textbf{421.2} & 279.8 & 263.0 & 262.9 & 279.8 & 278.25 \\
        &- & 0.65 & 308.8 & \textbf{404.4} & 275.1 & 257.8 & 257.8 & 275.1 & 273.83 \\
        &- & 0.7 & 297.5 & \textbf{385.3} & 270.6 & 252.4 & 252.3 & 270.6 & 269.57 \\
        &- & 0.75 & 292.4 & \textbf{364.2} & 266.0 & 247.1 & 245.4 & 266.0 & 265.29 \\
        &- & 0.8 & 288.9 & \textbf{295.6} & 261.5 & 241.4 & 238.1 & 261.5 & 260.91 \\
        &- & 0.85 & \textbf{283.7} & 229.1 & 256.1 & 235.6 & 230.6 & 256.1 & 255.74 \\
        \midrule        
        & & & \multicolumn{7}{c}{Geography (c)}\\ 
        \midrule        
        Capacitated & 0.25 &- & 270.6 & \textbf{278.0} & 248.9 & 248.3 & 247.5 & 241.9 & 248.9\\ 
        & 0.5 &- & 271.0 & \textbf{278.5} & 249.0 & 248.5 & 247.7 & 242.1 & 248.8\\ 
        & 0.75 &- & 270.5 & \textbf{274.5} & 249.0 & 248.6 & 247.9 & 242.3 & 248.7\\ 
        Uncapacitated &- & 0.5 & 267.2 & 245.8 & 266.6 & \textbf{267.9} & 266.9 & 256.6 & 265.2\\ 
        &- & 0.55 & 262.5 & 259.0 & 261.4 & \textbf{263.8} & 262.7 & 252.6 & 260.0\\ 
        &- & 0.6 & 261.6 & \textbf{268.4} & 257.6 & 259.4 & 258.2 & 249.0 & 255.9\\ 
        &- & 0.65 & 265.4 & \textbf{278.9} & 254.3 & 254.6 & 253.5 & 245.6 & 252.6\\ 
        &- & 0.7 & 269.5 & \textbf{287.9} & 251.5 & 250.0 & 248.3 & 242.9 & 249.8\\ 
        &- & 0.75 & 270.8 & \textbf{296.5} & 248.7 & 245.6 & 243.4 & 240.6 & 246.7\\ 
        &- & 0.8 & 269.9 & \textbf{301.7} & 245.2 & 241.3 & 237.9 & 238.5 & 243.5\\ 
        &- & 0.85 & 267.1 & \textbf{300.2} & 241.5 & 236.9 & 230.7 & 236.0 & 240.2\\ 
        \bottomrule[1pt]
    \end{tabular}
    \caption{Average number of services provided per day and policy.}
    \label{table:results}
\end{table}

We observe that except in two cases, the best policy is either IRL-E or IRL-P but their performance differs with respect to geography and demand development. For Geography~(a), there is a clear distinction between capacitated and uncapacitated demand. For the capacitated, policy IRL-E clearly outperforms IRL-P while, for uncapacitated demand, the opposite is the case. This leads to the following insight:

\begin{insight}\label{ins:regiona_}
    When the geography of two regions is similar and demand per region is limited, it is worth investing in all regions even if initial demand in one region is small. If demand per region is unlimited, it is beneficial to focus on the region with high initial demand.
\end{insight}

For Geography~(b), we observe a different picture. Here, IRL-P is nearly always substantially better than IRL-E, especially for instances with uncapacitated demand developments. The favorable location of Region~1 allows for more efficient service of requests to this region. Thus, demand in this region can be grown quickly and, at the same time, can be fulfilled more efficiently. There is one exception in the uncapacitated demand model when customer expectations are very high, $\bar{r}=0.85$. In this case, demand does not grow as much as others, and using the fleet to serve the existing demand in both regions via IRL-E becomes superior.

\begin{insight}\label{ins:regionb}
The spatial distribution of customers plays an important role in the decision about how to invest fleet resources. Here, it is usually beneficial to focus on conveniently located regions rather than spending significant resources satisfying demand in regions that require longer travel.
\end{insight}

For Geography~(c), even though the regions and their overall demand potential are identical, policy IRL-E does not provide the best results. Policy IRL-P performs slightly better. The reason is that, by spreading resources over four regions, demand per region does not increase as fast as in Geography~(a). However, over a horizon longer than two years, IRL-E may become superior eventually.

\begin{insight}\label{ins:regionc}
    When strategically distributing resources, the considered time horizon plays an important role. A shorter horizon may lead to exploiting existing demand structures, while a longer horizon may encourage investment in regions with lower initial demand.
\end{insight}


\subsection{Illustration of Intra-Day and Inter-Day Anticipation}

In the following, we illustrate how different policies with and without intra- and/or inter-day anticipation impact the development of demand in different regions. We use Geography~(a) with uncapacitated demand and $\bar{r}=0.8$ as an example. Figure~\ref{fig:inter-day} presents the average demand curves for the two regions and for the following five different policies:
\begin{enumerate}
    \item [{(1)}] Myopic, anticipating neither intra-day nor inter-day developments (average daily services 240.8).
    \item [{(2)}] The intra-day strategy that anticipates only intra-day developments (263.0).
    \item [{(3)}] The bucket policy that anticipates only inter-day developments (226.4).
    \item [{(4)}] The IRL-P policy that anticipates both (323.8).  
    \item [{(5)}] The IRL-E policy that anticipates both (281.2). 
\end{enumerate}

The development of the myopic policy shows a slight increase in demand at the beginning for both regions. A short time later, demand in Region~1 starts decreasing significantly while demand in Region~2 increases. Eventually, they reach similar levels around 150 requests per day. This behavior can be expected. The myopic strategy serves all feasible requests. Thus, it treats both regions similarly. At the same time, decision making is not anticipatory, and thus daily resources are consumed quickly. As a result, the overall service level is low.

For the intra-day policy, we see a different picture. The policy learns that requests from Region~1 are more valuable. Consequently, demand in Region~1 remains at a higher level while demand in Region~2 only grows slightly. This leads to a final total demand of about 325 requests, significantly higher than the myopic strategy. The bucket policy aims to balance demand between the two regions. This can also be seen in the demand development. The demand ratios for both regions converge quite quickly, reaching about 150 each after one year. We also observe that, in contrast to the myopic policy, demand in Region~1 drops initially. This is because requests in Region~1, which could have been fulfilled efficiently, are rejected to ensure equal treatment of both regions. Therefore, the bucket policy performs worst.

\begin{insight}
    Forcing equal service levels in all regions without considering the effectiveness of daily decision-making does not lead to market growth. Service in high-demand regions is neglected and customers leave.
\end{insight}

\begin{figure}[t]
\centering
\includegraphics[width=0.9\textwidth]{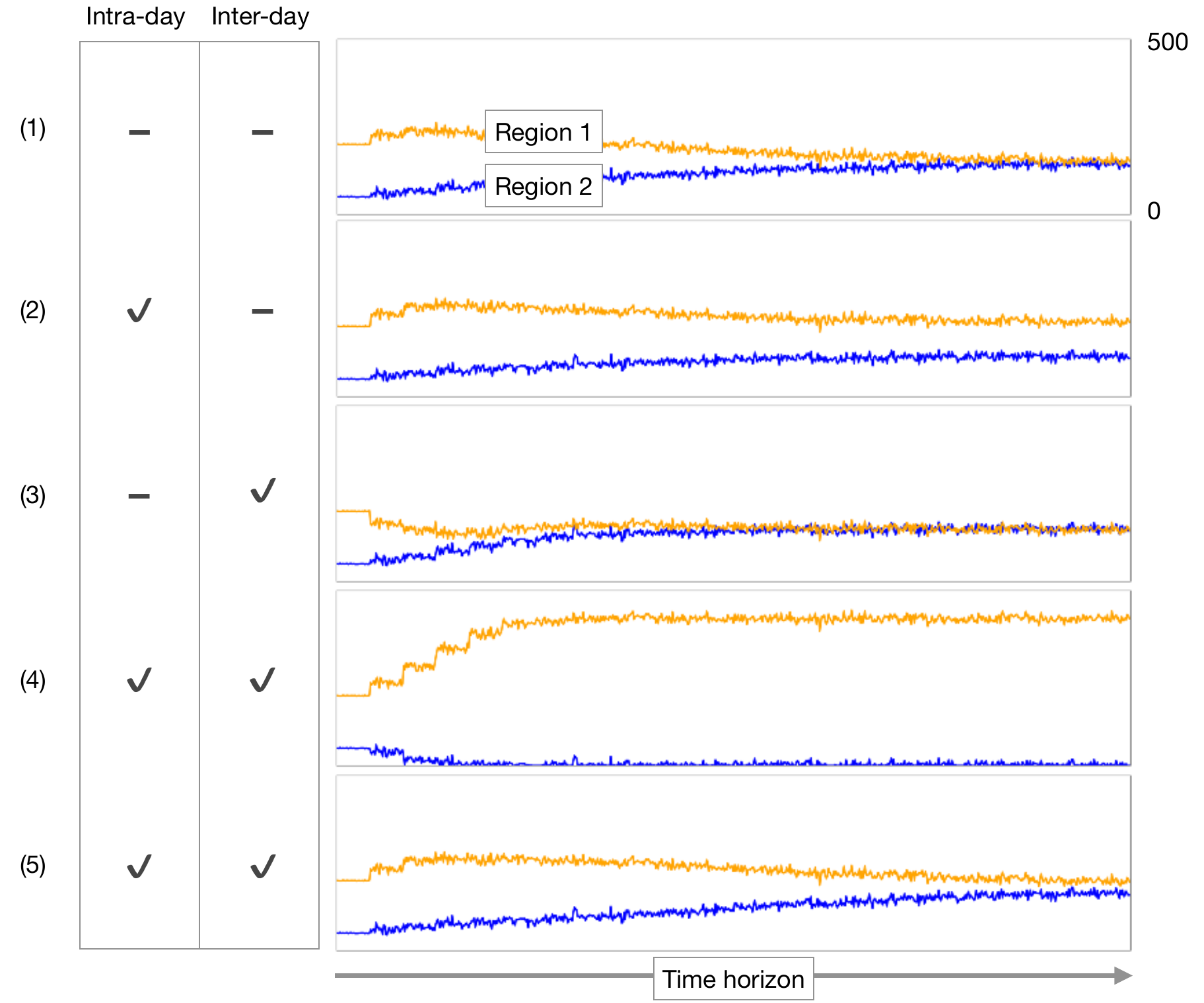}
\caption{Example for demand development when applying policies with and without intra-day and inter-day anticipation.}
\label{fig:inter-day}
\end{figure}

IRL-P behaves differently from all other policies in this illustration. Demand in Region~1 increases fast while demand in Region~2 drops to nearly zero, very similar to the development of Strategy~A in Figure~\ref{fig:example_inter}. This policy systematically prioritizes Region~1, which results in very high demand in Region~1 (around 400) and no demand in Region~2.

Compared to IRL-P, IRL-E presents a rather opposite behavior, similar to Strategy B in Figure~\ref{fig:example_inter}. Demand in Region~1 increases at the beginning because delivery resources are available given the limited demand in Region~2. Then, demand in Region~2 increases fast while demand in Region~1 converges to 200 requests per day. The same value is nearly obtained in Region~2, but the two-year horizon is not long enough to see both regions reach the same value. However, the final total demand is again close to 400, a little lower than IRL-P. Therefore, even though the objective value of IRL-E is significantly lower compared to IRL-P, the final customer base is more balanced. This leads us to our final insight:

\begin{insight}\label{ins:final_demand}
    Compared to treating all regions equally, prioritizing regions might lead to more revenue over the planning horizon, but it also results in a very unbalanced demand structure in the end. Companies therefore must acknowledge the trade-off between making money in the first years and establishing a customer base in the entire city.
\end{insight}

\section{Conclusions and Future Work}\label{sec:conclusions}



In this paper, we address a challenge faced by many on-demand delivery start-ups: how to manage long-term demand growth with limited delivery resources in the initial stages of the business. We developed an intra-day model to capture the service provider's daily operations and an inter-day model to account for the evolution of customer demand in each neighborhood. The objective was to maximize the expected number of services offered in the long run. To gain analytical insights into the optimal solutions, we studied a stylized inter-day problem, which revealed two strategies for delivery resource allocation. Inspired by the analytical results, we proposed four strategies for using reinforcement learning with information shaping to learn the intra-day policy while maximizing the inter-day objective. 

We conducted comprehensive experimental studies with two different demand models. The results demonstrate the effectiveness of our proposed training strategies and validate the two resource allocation strategies suggested by the properties of the optimal solutions. We find that in neighborhoods with a low potential for demand growth, it performs better to distribute delivery resources equally to all neighborhoods to maximize the total number of services provided. On the contrary, if neighborhoods have a high potential for demand growth, to achieve the maximum service levels, the service provider should invest all delivery resources in the initially high-demand neighborhood. We also show the economic value of incorporating demand development when designing the algorithm. Further, since the algorithm used is the core of on-demand delivery service providers' business, our proposed training strategies also make valuable contributions. 

This work offers several directions for future research. First, while efficiency remains the primary objective of any business, experiments in this paper show that policies differ in not only the number of services but also the demand distribution in the end. Here, future work may investigate the value of different distributions. Furthermore, the question of fairness within operations remains a topic worth exploring. Other work might consider location planning or service design in combination with our work. An example is to consider prices or service promise as discussed in \cite{stroh2022tactical}. Specifically, tight delivery deadlines and cheap prices may increase demand, but they can also reduce the number of services that can be made and the revenue generated from deliveries. Another interesting extension related to \cite{luy2023strategic} would be the option of adding more vehicle resources over time, e.g., after one year. Further, in our experiments, we assume that we know the existence of the expected demand development (but do not know how), while, in practice, this might be uncertain as well. Thus, predictions for expected demand and demand developments might be valuable. A potential starting point could be the work by \cite{zinnenlauf2024value}. Finally, we study an on-demand delivery problem, and it is worth exploring other types of on-demand transportation services, such as on-demand pick-up and delivery, meal delivery, or ride-sharing.  

Another area of interesting research opportunities is information shaping for reinforcement learning. To the best of our knowledge, we are the first to use analytical problem insights to adapt training data for RL. While our information shaping approach focuses on the stochastic information, other methods might shape the transition function instead, such as by adding or removing state information. Future work may further investigate when and how information shaping can be of particular value. Application areas are not limited to transportation and logistics, but may apply to other domains where complex decisions must be made under uncertainty, such as production and healthcare operations.



\begin{thebibliography}{86}
\providecommand{\natexlab}[1]{#1}
\providecommand{\url}[1]{\texttt{#1}}
\expandafter\ifx\csname urlstyle\endcsname\relax
  \providecommand{\doi}[1]{doi: #1}\else
  \providecommand{\doi}{doi: \begingroup \urlstyle{rm}\Url}\fi

\bibitem[Ahamed et~al.(2021)Ahamed, Zou, Farazi, and Tulabandhula]{ahamed2020deep}
T.~Ahamed, B.~Zou, N.~P. Farazi, and T.~Tulabandhula.
\newblock Deep reinforcement learning for crowdsourced urban delivery.
\newblock \emph{Transportation Research Part B: Methodological}, 152:\penalty0 227--257, 2021.

\bibitem[Al-Kanj et~al.(2020)Al-Kanj, Nascimento, and Powell]{al2020approximate}
L.~Al-Kanj, J.~Nascimento, and W.~B. Powell.
\newblock Approximate dynamic programming for planning a ride-hailing system using autonomous fleets of electric vehicles.
\newblock \emph{European Journal of Operational Research}, 284\penalty0 (3):\penalty0 1088--1106, 2020.

\bibitem[Arndt(1967)]{arndt1967word}
J.~Arndt.
\newblock \emph{Word of mouth advertising: A review of the literature}.
\newblock {New York} ({N.Y.}): Advertising Research Foundation, 1967.

\bibitem[Atasoy et~al.(2015)Atasoy, Ikeda, Song, and Ben-Akiva]{atasoy2015concept}
B.~Atasoy, T.~Ikeda, X.~Song, and M.~E. Ben-Akiva.
\newblock The concept and impact analysis of a flexible mobility on demand system.
\newblock \emph{Transportation Research Part C: Emerging Technologies}, 56:\penalty0 373--392, 2015.

\bibitem[Auad et~al.(2024)Auad, Erera, and Savelsbergh]{auad2024dynamic}
R.~Auad, A.~Erera, and M.~Savelsbergh.
\newblock Dynamic courier capacity acquisition in rapid delivery systems: A deep {Q}-learning approach.
\newblock \emph{Transportation Science}, 58\penalty0 (1):\penalty0 67--93, 2024.

\bibitem[Azi et~al.(2012)Azi, Gendreau, and Potvin]{azi}
N.~Azi, M.~Gendreau, and J.-Y. Potvin.
\newblock A dynamic vehicle routing problem with multiple delivery routes.
\newblock \emph{Annals of Operations Research}, 199\penalty0 (1):\penalty0 103--112, 2012.

\bibitem[Banerjee et~al.(2022)Banerjee, Erera, and Toriello]{banerjee2022fleet}
D.~Banerjee, A.~L. Erera, and A.~Toriello.
\newblock Fleet sizing and service region partitioning for same-day delivery systems.
\newblock \emph{Transportation Science}, 56\penalty0 (5):\penalty0 1327--1347, 2022.

\bibitem[Banerjee et~al.(2023)Banerjee, Erera, Stroh, and Toriello]{banerjee2022has}
D.~Banerjee, A.~L. Erera, A.~M. Stroh, and A.~Toriello.
\newblock Who has access to e-commerce and when? {T}ime-varying service regions in same-day delivery.
\newblock \emph{Transportation Research Part B: Methodological}, 170:\penalty0 148--168, 2023.

\bibitem[Beardwood et~al.(1959)Beardwood, Halton, and Hammersley]{beardwood1959shortest}
J.~Beardwood, J.~H. Halton, and J.~M. Hammersley.
\newblock The shortest path through many points.
\newblock \emph{Mathematical Proceedings of the Cambridge Philosophical Society}, 55\penalty0 (4):\penalty0 299–327, 1959.

\bibitem[Bishop et~al.(1995)]{bishop1995neural}
C.~M. Bishop et~al.
\newblock \emph{Neural networks for pattern recognition}.
\newblock {Oxford University Press}, 1995.

\bibitem[Boscoe et~al.(2012)Boscoe, Henry, and Zdeb]{boscoe}
F.~P. Boscoe, K.~A. Henry, and M.~S. Zdeb.
\newblock A nationwide comparison of driving distance versus straight-line distance to hospitals.
\newblock \emph{The Professional Geographer}, 64\penalty0 (2):\penalty0 188--196, 2012.

\bibitem[Boysen et~al.(2021)Boysen, Fedtke, and Schwerdfeger]{boysen2021last}
N.~Boysen, S.~Fedtke, and S.~Schwerdfeger.
\newblock Last-mile delivery concepts: A survey from an operational research perspective.
\newblock \emph{OR Spectrum}, 43\penalty0 (1):\penalty0 1--58, 2021.

\bibitem[Bracher et~al.(2021)Bracher, Frohner, and Raidl]{bracher2021learning}
A.~Bracher, N.~Frohner, and G.~R. Raidl.
\newblock Learning surrogate functions for the short-horizon planning in same-day delivery problems.
\newblock In \emph{International Conference on Integration of Constraint Programming, Artificial Intelligence, and Operations Research}, pages 283--298, 2021.

\bibitem[Brain(2023)]{SupplyChainBrain_2023}
S.~C. Brain.
\newblock Domino’s, {Pizza Hut} and other pizzerias struggle to find delivery drivers, Jan 2023.
\newblock URL \url{https://www.supplychainbrain.com/articles/36372-dominos-pizza-hut-and-other-pizzerias-work-to-recruit-delivery-drivers}.

\bibitem[Campbell and Savelsbergh(2006)]{campbell2006incentive}
A.~M. Campbell and M.~Savelsbergh.
\newblock Incentive schemes for attended home delivery services.
\newblock \emph{Transportation Science}, 40\penalty0 (3):\penalty0 327--341, 2006.

\bibitem[{\c{C}}avdar and Erkip(2023)]{ccavdar2023word}
B.~{\c{C}}avdar and N.~Erkip.
\newblock Word-of-mouth on action: Analysis of optimal shipment policy when customers are resentful.
\newblock \emph{Omega}, 118:\penalty0 102865, 2023.

\bibitem[Chen(2021)]{chen2021data}
B.~Chen.
\newblock Data-driven inventory control with shifting demand.
\newblock \emph{Production and Operations Management}, 30\penalty0 (5):\penalty0 1365--1385, 2021.

\bibitem[Chen et~al.(2022{\natexlab{a}})Chen, Hu, and Wang]{chen2022food}
M.~Chen, M.~Hu, and J.~Wang.
\newblock Food delivery service and restaurant: Friend or foe?
\newblock \emph{Management Science}, 68\penalty0 (9):\penalty0 6539--6551, 2022{\natexlab{a}}.

\bibitem[Chen et~al.(2022{\natexlab{b}})Chen, Ulmer, and Thomas]{deepQ}
X.~Chen, M.~W. Ulmer, and B.~W. Thomas.
\newblock Deep {Q}-learning for same-day delivery with vehicles and drones.
\newblock \emph{European Journal of Operational Research}, 298\penalty0 (3):\penalty0 939--952, 2022{\natexlab{b}}.

\bibitem[Chen et~al.(2023)Chen, Wang, Thomas, and Ulmer]{chen2023same}
X.~Chen, T.~Wang, B.~W. Thomas, and M.~W. Ulmer.
\newblock Same-day delivery with fair customer service.
\newblock \emph{European Journal of Operational Research}, 308\penalty0 (2):\penalty0 738--751, 2023.

\bibitem[Cheung et~al.(2009)Cheung, Lee, and Thadani]{cheung2009impact}
C.~M. Cheung, M.~K. Lee, and D.~R. Thadani.
\newblock The impact of positive electronic word-of-mouth on consumer online purchasing decision.
\newblock In \emph{World Summit on Knowledge Society}, pages 501--510, 2009.

\bibitem[Cosmi et~al.(2019)Cosmi, Nicosia, and Pacifici]{cosmi2019scheduling}
M.~Cosmi, G.~Nicosia, and A.~Pacifici.
\newblock Scheduling for last-mile meal-delivery processes.
\newblock \emph{IFAC-PapersOnLine}, 52\penalty0 (13):\penalty0 511--516, 2019.

\bibitem[C{\^o}t{\'e} et~al.(2019)C{\^o}t{\'e}, de~Queiroz, Gallesi, and Iori]{cote2021dynamic}
J.-F. C{\^o}t{\'e}, T.~A. de~Queiroz, F.~Gallesi, and M.~Iori.
\newblock Optimization methods for the same-day delivery problem.
\newblock \emph{Advances in Optimization and Decision Science for Society, Services and Enterprises: ODS, Genoa, Italy, September 4-7, 2019}, pages 335--349, 2019.

\bibitem[Craig et~al.(2016)Craig, DeHoratius, and Raman]{craig2016impact}
N.~Craig, N.~DeHoratius, and A.~Raman.
\newblock The impact of supplier inventory service level on retailer demand.
\newblock \emph{Manufacturing \& Service Operations Management}, 18\penalty0 (4):\penalty0 461--474, 2016.

\bibitem[Dayarian and Savelsbergh(2020)]{dayarian2020crowdshipping}
I.~Dayarian and M.~Savelsbergh.
\newblock Crowdshipping and same-day delivery: Employing in-store customers to deliver online orders.
\newblock \emph{Production and Operations Management}, 29\penalty0 (9):\penalty0 2153--2174, 2020.

\bibitem[Dayarian et~al.(2020)Dayarian, Savelsbergh, and Clarke]{dayarian2020same}
I.~Dayarian, M.~Savelsbergh, and J.-P. Clarke.
\newblock Same-day delivery with drone resupply.
\newblock \emph{Transportation Science}, 54\penalty0 (1):\penalty0 229--249, 2020.

\bibitem[Deng et~al.(2021)Deng, Fang, and Lim]{deng2021urban}
Q.~Deng, X.~Fang, and Y.~F. Lim.
\newblock Urban consolidation center or peer-to-peer platform? {T}he solution to urban last-mile delivery.
\newblock \emph{Production and Operations Management}, 30\penalty0 (4):\penalty0 997--1013, 2021.

\bibitem[Deng et~al.(2014)Deng, Shen, and Shanthikumar]{deng2014statistical}
T.~Deng, Z.-J.~M. Shen, and J.~G. Shanthikumar.
\newblock Statistical learning of service-dependent demand in a multiperiod newsvendor setting.
\newblock \emph{Operations Research}, 62\penalty0 (5):\penalty0 1064--1076, 2014.

\bibitem[Eschmann(2021)]{eschmann2021reward}
J.~Eschmann.
\newblock Reward function design in reinforcement learning.
\newblock \emph{Reinforcement Learning Algorithms: Analysis and Applications}, pages 25--33, 2021.

\bibitem[Fainmesser et~al.(2021)Fainmesser, Oli{\'e}~Lauga, and Ofek]{fainmesser2021ratings}
I.~P. Fainmesser, D.~Oli{\'e}~Lauga, and E.~Ofek.
\newblock Ratings, reviews, and the marketing of new products.
\newblock \emph{Management Science}, 67\penalty0 (11):\penalty0 7023--7045, 2021.

\bibitem[Fleckenstein et~al.(2023)Fleckenstein, Klein, and Steinhardt]{fleckenstein2023recent}
D.~Fleckenstein, R.~Klein, and C.~Steinhardt.
\newblock Recent advances in integrating demand management and vehicle routing: A methodological review.
\newblock \emph{European Journal of Operational Research}, 306\penalty0 (2):\penalty0 499--518, 2023.

\bibitem[Gallego et~al.(2019)Gallego, Topaloglu, et~al.]{gallego2019revenue}
G.~Gallego, H.~Topaloglu, et~al.
\newblock \emph{Revenue management and pricing analytics}, volume 209.
\newblock Springer, 2019.

\bibitem[Giallombardo et~al.(2022)Giallombardo, Guerriero, and Miglionico]{giallombardo2022profit}
G.~Giallombardo, F.~Guerriero, and G.~Miglionico.
\newblock Profit maximization via capacity control for distribution logistics problems.
\newblock \emph{Computers \& Industrial Engineering}, 171:\penalty0 108466, 2022.

\bibitem[Grzes(2017)]{grzes2017reward}
M.~Grzes.
\newblock Reward shaping in episodic reinforcement learning.
\newblock In \emph{Proceedings of the 16th International Conference on Autonomous Agents and Multiagent Systems (AAMAS 2017)}, pages 565--573, Redhook, NY, May 2017. International Foundation for Autonomous Agents and Multiagent Systems, Curran Associates, Inc.

\bibitem[Haarnoja et~al.(2024)Haarnoja, Moran, Lever, Huang, Tirumala, Humplik, Wulfmeier, Tunyasuvunakool, Siegel, Hafner, et~al.]{haarnoja2024learning}
T.~Haarnoja, B.~Moran, G.~Lever, S.~H. Huang, D.~Tirumala, J.~Humplik, M.~Wulfmeier, S.~Tunyasuvunakool, N.~Y. Siegel, R.~Hafner, et~al.
\newblock Learning agile soccer skills for a bipedal robot with deep reinforcement learning.
\newblock \emph{Science Robotics}, 9\penalty0 (89):\penalty0 eadi8022, 2024.

\bibitem[Haddon(2022)]{Haddon_2022}
H.~Haddon.
\newblock Domino’s, {P}izza {H}ut and other pizzerias work to recruit delivery drivers, Dec 2022.
\newblock URL \url{https://www.wsj.com/articles/dominos-pizza-hut-and-other-pizzerias-work-to-recruit-delivery-drivers-11672449628}.

\bibitem[He and Goh(2022)]{he2022profit}
E.~J. He and J.~Goh.
\newblock Profit or growth? {D}ynamic order allocation in a hybrid workforce.
\newblock \emph{Management Science}, 68\penalty0 (8):\penalty0 5891--5906, 2022.

\bibitem[Hildebrandt et~al.(2023{\natexlab{a}})Hildebrandt, Bode, Ulmer, and Mattfeld]{hildebrandt2023reinforcement}
F.~D. Hildebrandt, A.~Bode, M.~W. Ulmer, and D.~C. Mattfeld.
\newblock Reinforcement learning variants for stochastic dynamic combinatorial optimization problems in transportation.
\newblock \emph{Working Paper Series}, 2023{\natexlab{a}}.

\bibitem[Hildebrandt et~al.(2023{\natexlab{b}})Hildebrandt, Thomas, and Ulmer]{hildebrandt2023opportunities}
F.~D. Hildebrandt, B.~W. Thomas, and M.~W. Ulmer.
\newblock Opportunities for reinforcement learning in stochastic dynamic vehicle routing.
\newblock \emph{Computers \& Operations Research}, 150:\penalty0 106071, 2023{\natexlab{b}}.

\bibitem[Holler et~al.(2019)Holler, Vuorio, Qin, Tang, Jiao, Jin, Singh, Wang, and Ye]{holler2019deep}
J.~Holler, R.~Vuorio, Z.~Qin, X.~Tang, Y.~Jiao, T.~Jin, S.~Singh, C.~Wang, and J.~Ye.
\newblock Deep reinforcement learning for multi-driver vehicle dispatching and repositioning problem.
\newblock In \emph{2019 IEEE International Conference on Data Mining (ICDM)}, pages 1090--1095. IEEE, 2019.

\bibitem[Hosni et~al.(2014)Hosni, Naoum-Sawaya, and Artail]{hosni2014shared}
H.~Hosni, J.~Naoum-Sawaya, and H.~Artail.
\newblock The shared-taxi problem: Formulation and solution methods.
\newblock \emph{Transportation Research Part B: Methodological}, 70:\penalty0 303--318, 2014.

\bibitem[Jahanshahi et~al.(2022)Jahanshahi, Bozanta, Cevik, Kavuk, Tosun, Sonuc, Kosucu, and Ba{\c{s}}ar]{Jahanshahi}
H.~Jahanshahi, A.~Bozanta, M.~Cevik, E.~M. Kavuk, A.~Tosun, S.~B. Sonuc, B.~Kosucu, and A.~Ba{\c{s}}ar.
\newblock A deep reinforcement learning approach for the meal delivery problem.
\newblock \emph{Knowledge-Based Systems}, 243:\penalty0 108489, 2022.

\bibitem[James et~al.(2013)James, Witten, Hastie, Tibshirani, et~al.]{james2013introduction}
G.~James, D.~Witten, T.~Hastie, R.~Tibshirani, et~al.
\newblock \emph{An introduction to statistical learning}, volume 112.
\newblock Springer, 2013.

\bibitem[Kanervisto et~al.(2020)Kanervisto, Scheller, and Hautam{\"a}ki]{kanervisto2020action}
A.~Kanervisto, C.~Scheller, and V.~Hautam{\"a}ki.
\newblock Action space shaping in deep reinforcement learning.
\newblock In \emph{2020 IEEE conference on games (CoG)}, pages 479--486. IEEE, 2020.

\bibitem[Klapp et~al.(2016)Klapp, Erera, and Toriello]{klapp31}
M.~A. Klapp, A.~L. Erera, and A.~Toriello.
\newblock The one-dimensional dynamic dispatch waves problem.
\newblock \emph{Transportation Science}, 52\penalty0 (2):\penalty0 402--415, 2016.

\bibitem[Klapp et~al.(2018)Klapp, Erera, and Toriello]{klapp32}
M.~A. Klapp, A.~L. Erera, and A.~Toriello.
\newblock The dynamic dispatch waves problem for same-day delivery.
\newblock \emph{European Journal of Operational Research}, 271\penalty0 (2):\penalty0 519--534, 2018.

\bibitem[Klapp et~al.(2020)Klapp, Erera, and Toriello]{klapp2020request}
M.~A. Klapp, A.~L. Erera, and A.~Toriello.
\newblock Request acceptance in same-day delivery.
\newblock \emph{Transportation Research Part E: Logistics and Transportation Review}, 143:\penalty0 102083, 2020.

\bibitem[Kullman et~al.(2022)Kullman, Cousineau, Goodson, and Mendoza]{kullman2022dynamic}
N.~D. Kullman, M.~Cousineau, J.~C. Goodson, and J.~E. Mendoza.
\newblock Dynamic ride-hailing with electric vehicles.
\newblock \emph{Transportation Science}, 56\penalty0 (3):\penalty0 775--794, 2022.

\bibitem[Lang et~al.(2021{\natexlab{a}})Lang, Cleophas, and Ehmke]{lang2021anticipative}
M.~A. Lang, C.~Cleophas, and J.~F. Ehmke.
\newblock Anticipative dynamic slotting for attended home deliveries.
\newblock In \emph{Operations Research Forum}, volume~2, pages 1--39. Springer, 2021{\natexlab{a}}.

\bibitem[Lang et~al.(2021{\natexlab{b}})Lang, Cleophas, and Ehmke]{lang2021multi}
M.~A. Lang, C.~Cleophas, and J.~F. Ehmke.
\newblock Multi-criteria decision making in dynamic slotting for attended home deliveries.
\newblock \emph{Omega}, 102:\penalty0 102305, 2021{\natexlab{b}}.

\bibitem[Laud and DeJong(2002)]{laud2002reinforcement}
A.~Laud and G.~DeJong.
\newblock Reinforcement learning and shaping: Encouraging intended behaviors.
\newblock In \emph{ICML}, pages 355--362, 2002.

\bibitem[Lin(1992)]{lin}
L.-J. Lin.
\newblock \emph{Reinforcement learning for robots using neural networks}.
\newblock Carnegie Mellon University, 1992.

\bibitem[Liu et~al.(2021)Liu, He, and Max~Shen]{liu2021time}
S.~Liu, L.~He, and Z.-J. Max~Shen.
\newblock On-time last-mile delivery: Order assignment with travel-time predictors.
\newblock \emph{Management Science}, 67\penalty0 (7):\penalty0 4095--4119, 2021.

\bibitem[Liu(2019)]{liu_2019}
Y.~Liu.
\newblock An optimization-driven dynamic vehicle routing algorithm for on-demand meal delivery using drones.
\newblock \emph{Computers \& Operations Research}, 111:\penalty0 1--20, 2019.

\bibitem[Lo(2012)]{lo2012consumer}
S.~C. Lo.
\newblock Consumer decisions: The effect of word-of-mouth.
\newblock \emph{International Journal of Organizational Innovation}, 4\penalty0 (3), 2012.

\bibitem[Luy et~al.(2024)Luy, Hiermann, and Schiffer]{luy2023strategic}
J.~Luy, G.~Hiermann, and M.~Schiffer.
\newblock Express: Strategic workforce planning in crowdsourced delivery with hybrid driver fleets.
\newblock \emph{Production and Operations Management}, page 10591478241268602, 2024.

\bibitem[Mackert(2019)]{mackert2019choice}
J.~Mackert.
\newblock Choice-based dynamic time slot management in attended home delivery.
\newblock \emph{Computers \& Industrial Engineering}, 129:\penalty0 333--345, 2019.

\bibitem[Mansfield(2022)]{Mansfield_2022}
K.~Mansfield.
\newblock Local businesses impacted by delivery driver shortage, Jan 2022.
\newblock URL \url{https://observer-reporter.com/news/localnews/local-businesses-impacted-by-delivery-driver-shortage/article_2847ddec-7882-11ec-8516-a3f455b37d14.html}.

\bibitem[Mathies and Gudergan(2011)]{mathies2011role}
C.~Mathies and S.~P. Gudergan.
\newblock The role of fairness in modelling customer choice.
\newblock \emph{Australasian Marketing Journal}, 19\penalty0 (1):\penalty0 22--29, 2011.

\bibitem[Monsuw{\'e} et~al.(2004)Monsuw{\'e}, Dellaert, and De~Ruyter]{y2004drives}
T.~P.~y. Monsuw{\'e}, B.~G. Dellaert, and K.~De~Ruyter.
\newblock What drives consumers to shop online? {A} literature review.
\newblock \emph{International Journal of Service Industry Management}, 15\penalty0 (1):\penalty0 102--121, 2004.

\bibitem[Ng et~al.(1999)Ng, Harada, and Russell]{ng1999policy}
A.~Y. Ng, D.~Harada, and S.~Russell.
\newblock Policy invariance under reward transformations: Theory and application to reward shaping.
\newblock In \emph{ICML}, volume~99, pages 278--287, 1999.

\bibitem[Perera et~al.(2020)Perera, Dawande, Janakiraman, and Mookerjee]{perera2020retail}
S.~Perera, M.~Dawande, G.~Janakiraman, and V.~Mookerjee.
\newblock Retail deliveries by drones: How will logistics networks change?
\newblock \emph{Production and Operations Management}, 29\penalty0 (9):\penalty0 2019--2034, 2020.

\bibitem[Picken(2017)]{picken2017startup}
J.~C. Picken.
\newblock From startup to scalable enterprise: Laying the foundation.
\newblock \emph{Business Horizons}, 60\penalty0 (5):\penalty0 587--595, 2017.

\bibitem[Powell(2022)]{Powell2022RL}
W.~Powell.
\newblock \emph{Reinforcement Learning and Stochastic Optimization: A Unified Framework for Sequential Decisions}.
\newblock John Wiley and Sons, Hoboken, NJ, USA, 2022.

\bibitem[Prasad et~al.(2017)Prasad, Gupta, and Totala]{prasad2017social}
S.~Prasad, I.~C. Gupta, and N.~K. Totala.
\newblock Social media usage, electronic word of mouth and purchase-decision involvement.
\newblock \emph{Asia-Pacific Journal of Business Administration}, 9\penalty0 (2):\penalty0 134--145, 2017.

\bibitem[Qian(2014)]{qian2014market}
L.~Qian.
\newblock Market-based supplier selection with price, delivery time, and service level dependent demand.
\newblock \emph{International Journal of Production Economics}, 147:\penalty0 697--706, 2014.

\bibitem[Retzlaff et~al.(2024)Retzlaff, Das, Wayllace, Mousavi, Afshari, Yang, Saranti, Angerschmid, Taylor, and Holzinger]{retzlaff2024human}
C.~O. Retzlaff, S.~Das, C.~Wayllace, P.~Mousavi, M.~Afshari, T.~Yang, A.~Saranti, A.~Angerschmid, M.~E. Taylor, and A.~Holzinger.
\newblock Human-in-the-loop reinforcement learning: A survey and position on requirements, challenges, and opportunities.
\newblock \emph{Journal of Artificial Intelligence Research}, 79:\penalty0 359--415, 2024.

\bibitem[Schubert et~al.(2021)Schubert, Kuhn, and Holzapfel]{schubert2020same}
D.~Schubert, H.~Kuhn, and A.~Holzapfel.
\newblock Same-day deliveries in omnichannel retail: Integrated order picking and vehicle routing with vehicle-site dependencies.
\newblock \emph{Naval Research Logistics (NRL)}, 68\penalty0 (6):\penalty0 721--744, 2021.

\bibitem[Schwär(2024)]{capital2024}
H.~Schwär.
\newblock {Getir und Gorillas verlassen Deutschland:} ''{Sie} waren einfach zu klein''.
\newblock \emph{Capital}, April 24 2024.
\newblock URL \url{https://www.capital.de/wirtschaft-politik/getir-und-gorillas-verlassen-deutschland---sie-waren-einfach-zu-klein--34657780.html}.

\bibitem[Stroh et~al.(2022)Stroh, Erera, and Toriello]{stroh2022tactical}
A.~M. Stroh, A.~L. Erera, and A.~Toriello.
\newblock Tactical design of same-day delivery systems.
\newblock \emph{Management Science}, 68\penalty0 (5):\penalty0 3444--3463, 2022.

\bibitem[{The Economist}(2024)]{economist2024}
{The Economist}.
\newblock How to train your large language model, March 13 2024.
\newblock URL \url{https://www.economist.com/science-and-technology/2024/03/13/how-to-train-your-large-language-model}.

\bibitem[Ulmer(2020)]{ulmer2020dynamic}
M.~W. Ulmer.
\newblock Dynamic pricing and routing for same-day delivery.
\newblock \emph{Transportation Science}, 54\penalty0 (4):\penalty0 1016--1033, 2020.

\bibitem[Ulmer and Thomas(2018)]{drones}
M.~W. Ulmer and B.~W. Thomas.
\newblock Same-day delivery with heterogeneous fleets of drones and vehicles.
\newblock \emph{Networks}, 72\penalty0 (4):\penalty0 475--505, 2018.

\bibitem[Ulmer et~al.(2018{\natexlab{a}})Ulmer, Mattfeld, and K{\"o}ster]{ulmer2018budgeting}
M.~W. Ulmer, D.~C. Mattfeld, and F.~K{\"o}ster.
\newblock Budgeting time for dynamic vehicle routing with stochastic customer requests.
\newblock \emph{Transportation Science}, 52\penalty0 (1):\penalty0 20--37, 2018{\natexlab{a}}.

\bibitem[Ulmer et~al.(2018{\natexlab{b}})Ulmer, Soeffker, and Mattfeld]{ulmer2018value}
M.~W. Ulmer, N.~Soeffker, and D.~C. Mattfeld.
\newblock Value function approximation for dynamic multi-period vehicle routing.
\newblock \emph{European Journal of Operational Research}, 269\penalty0 (3):\penalty0 883--899, 2018{\natexlab{b}}.

\bibitem[Ulmer et~al.(2019)Ulmer, Thomas, and Mattfeld]{ulmer45}
M.~W. Ulmer, B.~W. Thomas, and D.~C. Mattfeld.
\newblock Preemptive depot returns for same-day delivery.
\newblock \emph{{EURO} Journal of Transportation and Logistics}, 8\penalty0 (4):\penalty0 327--361, 2019.

\bibitem[Vinsensius et~al.(2020)Vinsensius, Wang, Chew, and Lee]{vinsensius2020dynamic}
A.~Vinsensius, Y.~Wang, E.~P. Chew, and L.~H. Lee.
\newblock Dynamic incentive mechanism for delivery slot management in e-commerce attended home delivery.
\newblock \emph{Transportation Science}, 54\penalty0 (3):\penalty0 567--587, 2020.

\bibitem[Voccia et~al.(2019)Voccia, Campbell, and Thomas]{voccia}
S.~A. Voccia, A.~M. Campbell, and B.~W. Thomas.
\newblock The same-day delivery problem for online purchases.
\newblock \emph{Transportation Science}, 53\penalty0 (1):\penalty0 167--184, 2019.

\bibitem[Wa{\ss}muth et~al.(2023)Wa{\ss}muth, K{\"o}hler, Agatz, and Fleischmann]{wassmuth2023demand}
K.~Wa{\ss}muth, C.~K{\"o}hler, N.~Agatz, and M.~Fleischmann.
\newblock Demand management for attended home delivery—a literature review.
\newblock \emph{European Journal of Operational Research}, 311\penalty0 (3):\penalty0 801--815, 2023.

\bibitem[Xiong et~al.(2022)Xiong, Li, Yang, and Shen]{xiong2022data}
X.~Xiong, Y.~Li, W.~Yang, and H.~Shen.
\newblock Data-driven robust dual-sourcing inventory management under purchase price and demand uncertainties.
\newblock \emph{Transportation Research Part E: Logistics and Transportation Review}, 160:\penalty0 102671, 2022.

\bibitem[Xu et~al.(2018)Xu, Li, Guan, Zhang, Li, Nan, Liu, Bian, and Ye]{xu2018large}
Z.~Xu, Z.~Li, Q.~Guan, D.~Zhang, Q.~Li, J.~Nan, C.~Liu, W.~Bian, and J.~Ye.
\newblock Large-scale order dispatch in on-demand ride-hailing platforms: A learning and planning approach.
\newblock In \emph{Proceedings of the 24th ACM SIGKDD International Conference on Knowledge Discovery \& Data Mining}, pages 905--913, 2018.

\bibitem[Yang et~al.(2016)Yang, Strauss, Currie, and Eglese]{yang2016choice}
X.~Yang, A.~K. Strauss, C.~S. Currie, and R.~Eglese.
\newblock Choice-based demand management and vehicle routing in e-fulfillment.
\newblock \emph{Transportation Science}, 50\penalty0 (2):\penalty0 473--488, 2016.

\bibitem[Zadeh et~al.(2023)Zadeh, Street, and Thomas]{zadeh2023optimizing}
S.~A. Zadeh, W.~N. Street, and B.~W. Thomas.
\newblock Optimizing warfarin dosing using deep reinforcement learning.
\newblock \emph{Journal of Biomedical Informatics}, 137:\penalty0 104267, 2023.

\bibitem[Zadeh et~al.(2024)Zadeh, Street, and Thomas]{zadeh2024explainable}
S.~A. Zadeh, W.~N. Street, and B.~W. Thomas.
\newblock An explainable deep reinforcement learning model for warfarin maintenance dosing using policy distillation and action forging.
\newblock \emph{arXiv preprint arXiv:2404.17187}, 2024.

\bibitem[Zhang et~al.(2023)Zhang, Jacquillat, Wang, and Wang]{zhang2023routing}
W.~Zhang, A.~Jacquillat, K.~Wang, and S.~Wang.
\newblock Routing optimization with vehicle--customer coordination.
\newblock \emph{Management Science}, 69\penalty0 (11):\penalty0 6876--6897, 2023.

\bibitem[Zinnenlauf et~al.(2024)Zinnenlauf, Pina-Pardo, and Winkenbach]{zinnenlauf2024value}
P.~Zinnenlauf, J.~C. Pina-Pardo, and M.~Winkenbach.
\newblock The value of demand forecasting in stochastic last-mile fleet sizing and composition planning.
\newblock \emph{MIT Libraries}, 2024.
\newblock URL \url{https://dspace.mit.edu/handle/1721.1/155674}.

\end{thebibliography}

\bibliographystyle{abbrvnat}

\appendix

\section{Proofs}
\subsection{Theorem~\ref{theorem:optimal_main_body}}\label{appendix:theorem1}

    First, consider the objective as a function of $r_2$ and $\lambda_2$, denoted by $f(r_2,\lambda_2)$. With Proposition~\ref{binding_assumption}, solving the resource constraint for $r_1\lambda_1$ yields 
    \begin{equation}\label{ri_lambdai}
    r_1\lambda_1 = \frac{T^2-2T\beta\sqrt{Ar_2\lambda_2}+A\beta^2r_2\lambda_2}{A\beta^2}.
    \end{equation}
    Substituting $r_1\lambda_1$ in the objective according to Equation~\ref{ri_lambdai}, we get
    \begin{equation}
        f(r_2,\lambda_2)=\frac{T^2-2T\beta\sqrt{Ar_2\lambda_2}+A\beta^2r_2\lambda_2}{A\beta^2}+r_2\lambda_2.
    \end{equation}
    Since $r_2$ and $\lambda_2$ are constrained by $\lambda_2 = log(M_2r_2+1)$, substituting $\lambda_2$ in $f(r_2,\lambda_2)$ yields
    \begin{equation}
        f(r_2)=\frac{T^2-2T\beta \sqrt{A log(M_2 r_2+1)r_2}+2A\beta^2log(M_2r_2+1)r_2}{A\beta^2},\ 0\leq r_2\leq 1.
    \end{equation}
    Differentiating $f(r_2)$ with respect to $r_2$, we obtain the derivative of the objective function
    \begin{equation*}
        f'(r_2)=\frac{[log(M_2r_2+1)+M_2r_2+log(M_2r_2+1)M_2r_2][2\beta \sqrt{Alog(M_2r_2+1)r_2}-T]}{\beta(1+M_2r_2) \sqrt{Alog(M_2r_2+1)r_2}}
        +log(M_2r_2+1)
        +\frac{M_2r_2}{M_2r_2+1}
    \end{equation*}
    \begin{equation}\label{obj_derivative}
        =\frac{[log(M_2r_2+1)+M_2r_2+log(M_2r_2+1)M_2r_2]\cdot[2\beta\sqrt{A log(M_2r_2+1)r_2}-T]}
        {\beta \sqrt{Alog(M_2r_2+1)r_2}(M_2r_2+1)}.
    \end{equation}
    Note that the objective function $f(r_2)$ is defined over $r_2\in[0,1]$ but the derivative $f'(r_2)$ is not defined at $r_2=0$. Extremas can occur where $f'(r_2)$ equals zero or where $f(r_2)$ is defined but $f'(r_2)$ is not defined. To this end, we then consider two subsets of $0\leq r_2\leq 1$, i.e., $0< r_2\leq 1$ (Lemma~\ref{lemma:main_excluding_0}) and $r_2=0$. The findings are summarized in Theorem~\ref{theorem:optimal_main_body}.

    \begin{proof}{Proof of Lemma~\ref{lemma:main_excluding_0}.}
    
    The denominator in Equation~\ref{obj_derivative} is nonnegative for all $0<r_2\leq 1$. Setting the numerator equal to $0$ to find any critical points, we get 
    \begin{equation}\label{first_critical}
        log(M_2r_2+1)+M_2r_2+log(M_2r_2+1)M_2r_2=0.
    \end{equation}
    \begin{equation}\label{second_critical}
        \textrm{or\ \ \ }\ 2\beta\sqrt{A log(M_2r_2+1)r_2}-T=0.
    \end{equation}
    Equation~\ref{first_critical} has no solution because all three terms on the left-hand side are strictly greater than $0$ when $0<r_2\leq 1$. Rearranging the terms in Equation~\ref{second_critical} gives us
    \begin{equation}\label{half_resource}
        \beta\sqrt{A log(M_2r_2+1)r_2}=\frac{T}{2}.
    \end{equation}
    Because of the demand equation $\lambda_2=log(M_2r_2+1)$, Equation~\ref{half_resource} becomes
    \begin{equation}\label{half_resource_2}
        \beta\sqrt{A\lambda_2 r_2}=\frac{T}{2}.
    \end{equation}    
    The left-hand side is delivery resources the service provider invests in region $\mathcal{Z}_2$ according to the Beardwood, Halton, and Hammersley approximation. The right-hand side is exactly half of the total delivery resources available to the service provider. We are aware that we need to further analyze if there is a global maxima when Equation~\ref{half_resource} is achieved at $r\textsubscript{critical}$. However, due to the lack of the closed-form solution to this equation, it is not very practical to do so. (We conduct comprehensive computations to validate the findings.) Consequently, this Lemma can be considered as potentially optimal in addressing the problem.
    \end{proof}

    Next, we include $r_2=0$ in our analysis. When $r_2=0$, the service provider invests all delivery resources $T$ in the other region $\mathcal{Z}_1$, not necessarily accepting all requests. (Note that the TSP approximation $\beta\sqrt{A\lambda_1 r_1}$ represents the delivery resources invested, e.g., time, while $\lambda_1 r_1$ represents the actual number of services offered.) At $r_2=0$, the objective value is $f(0)=\frac{T^2}{A\beta^2}$. This objective value can be the maximum if $f(0)>f(r\textsubscript{critical})$. Again, because the exact value of $r\textsubscript{critical}$ is lacking, it is only possible that the optimal solution is $r_2=0$.

    Finally, the above analyses complete the proof for Theorem~\ref{theorem:optimal_main_body}.

\subsection{Corollary~\ref{optimal_ratio_theorem_main_body}}\label{sec:corollary_proof}
    \begin{proof}[Proof of Corollary~\ref{optimal_ratio_theorem_main_body}.]
    Assume the optimal solution $(r^*_1,r^*_2)$ occurs when Equation~\ref{second_critical} is true. Solving Equation~\ref{second_critical} yields 
    \begin{equation}\label{j_critical}
        log(M_2r^*_2+1)r^*_2=\frac{T^2}{4A\beta^2}.
    \end{equation}
    If we replace the index $2$ by $1$ in the beginning and repeat the same steps from Equation~\ref{ri_lambdai} to~\ref{second_critical}, we can obtain a similar equation,
    \begin{equation}\label{i_critical}
        log(M_1r^*_1+1)r^*_1=\frac{T^2}{4A\beta^2}.
    \end{equation}
    Then,
    \begin{equation}
        log(M_2r^*_2+1)r^*_2=\frac{T^2}{4A\beta^2}=log(M_1r^*_1+1)r^*_1,
    \end{equation}
    \begin{equation}
        log(M_2r^*_2+1)r^*_2=log(M_1r^*_1+1)r^*_1.
    \end{equation}
    Since $\lambda_i = log(M_ir^*_i+1)$ for $i=1,2$, we get
    \begin{equation}
        \lambda_2 r^*_2=\lambda_1 r^*_1,
    \end{equation}
    \begin{equation}
        \frac{r^*_1}{r^*_2} =\frac{\lambda_2}{\lambda_1}.
    \end{equation}
    \end{proof}

\section{Details for Reinforcement Learning via Information Shaping}\label{sec:additional_details}

\subsection{Intra-day Solution}\label{appendix:intra_day}
Given how information shaping controls demand developments from day to day, we next describe how RL is used to train the policy for each day. Our method uses an artificial neural network, making the approach so-called ``deep reinforcement learning''. The use of the neural network eliminates the need for knowledge of the exact form of the value function. However, the large decision space still prevents neural networks from learning efficiently. To this end, rather than having the neural net learning routing in addition to whether to accept a customer request, we reduce the decision space by implementing a \textit{cheapest insertion} routing heuristic. This heuristic inserts a customer into a vehicle's delivery route at the position that results in the smallest increase in travel time (see \cite{azi} for details). The use of the routing heuristic reduces the number of possible decisions to only $P+1$ (assignment to each vehicle plus no service), and the resulting reduced decision space at the \textit{k}th decision point is $\hat{\mathcal{X}}_k^\iota=\{0,1,\dots,P\}$. Upon receiving a customer request, the service provider can accept the request and assign it to a vehicle $v_p$ (decision $p$), or not offer the service at all (decision $0$). Note that if a vehicle $v_p$ cannot feasibly make the delivery to this customer before the deadline, then decision $p$ is not available at this decision point. Moreover, even if all vehicles can feasibly make the delivery, the provider can still decide not to offer the service. 

With the reduced decision space, our deep reinforcement learning approach learns the expected value of state-decision pairs. This approach is referred to as ``deep Q-learning'', and the state-decision value is referred to as ``Q-value.'' The Q-value of decision $x_k$ in state $S_k$ is defined by the Q-function
\begin{equation}
    Q(S_k^\iota, x_k^\iota)=R^\iota(S_k^\iota,x_k^\iota) + \mathbb{E}[\hat{V}(S_{k+1}^\iota)|S_k^\iota,x_k^\iota].
\end{equation}
The hat notation indicates that $\hat{V}(S_{k+1}^\iota)$ is the value of state $S_{k+1}^\iota$ given the reduced decision space. Then, instead of solving it exactly, we can solve an approximate form of the Bellman equation,
\begin{equation}\label{eq:qBellman}
\hat{V}(S_k^\iota)=\max_{x_k^\iota\in \hat{\mathcal{X}_k^\iota}}\{Q(S_k^\iota, x_k^\iota)\}.
\end{equation}

To use the vast amount of information in our problem, we employ an artificial neural network denoted by a set of weights. The input layer takes the information from the state, referred to as ``features'', and the output layer provides the approximated Q-values. The features we consider are as follows. 
\begin{itemize}
    \item \textit{Decision point}: (1) Time of decision point, $t_k^\iota$. 
    \item \textit{Customer}: (2) The region from which the request is made. (3) Proxy distance to the warehouse, represented by a vehicle's direct travel time from the warehouse to the customer.
    \item \textit{Vehicles}: (4) Time at which each vehicle returns to the warehouse. (5) Each vehicle's feasibility to serve this customer. (6) Each vehicle's increase in travel time if the current customer is assigned to it (a large number when infeasible).
    \item \textit{Region}: (7) Expected demand of each region. (8) Current service level in the day of each region. 
\end{itemize}

These features have different ranges of values. To prevent certain broad-range features from dominating the learning, we apply min-max normalization to them so that every feature ranges from $0$ to $1$ \citep{bishop1995neural}. We perform training on the weights $\phi$ of the neural net. Thus, the policy that uses the learned neural net to make decisions is a solution to the intra-day problem.

\subsection{Details for Training and Evaluation}\label{appendix:training_details}

The neural network is composed of two hidden layers, each consisting of $50$ neurons. The activation function used in both the input and hidden layers is the ReLU function, denoted as, $ReLU(x)=\max\{x,0\}$. For each demand setting, we generate a training set of $1500$ instances and a test set of $500$ instances. During training, we utilize an $\epsilon$-greedy strategy to select actions, where the value of $\epsilon$ decreases exponentially from $1$ to $0.01$. To address the issue of correlations in consecutive states, we implement experience replay, as proposed in \citet{lin}. The training concludes after completing $200000$ epochs. 
Then, we evaluate the policy $100$ times and report the average performance.

In our experiments, to get the historical service level for each region, we divide the total number
of services in a region by the total number of requests received from that region in the last $30$ days. Specifically, at the $u$th update, the historical service level for Region~$i$ in the last $30$ days is calculated as 
$$r_{u-1,i}=\frac{\sum_{l=0}^{l=29}\sum_{k=0}^{K}\mathbbm{1}_{i}(c_{30u-l,k})\cdot a_{30u-l,k}}{\sum_{l=0}^{l=29}|\mathcal{C}_{30u-l,i}|}.$$

The numerator calculates the number of services in Region~$i$. The indicator function $\mathbbm{1}_{i}(\cdot)$ checks if a customer request is made from Region~$i$. The notation $\mathcal{C}_{30u-l,i}$ in the denominator represents the set of customer requests made on day $30u-l$ from Region~$i$, and thus the denominator calculates the total number of requests made from Region~$i$.

In Section~\ref{sec:information_shaping}, we introduce the approach of using a coefficient of variation (COV) of $0.5$. In our experiments, we explored several values of COV for the IRL-P. In the main paper, we report the results of the IRL-P that was trained with a COV of $0.25$ for the prioritized region(s) and a COV of $0.5$ for the other region(s). The performance of this policy shows a smaller variance in the number of requests served over different geographies. 

\section{Additional Details for Benchmark Policies}\label{appendix:benchmark}
This section provides the details for decision-making of benchmark policies. We use the notations from Section~\ref{sec:intra_day} and focus on the acceptance decision in the process. When the policy involves the demand evolution, we append a $\tau_n$ in the indices. 

\subsection{Myopic policy.} In a state $S_k^\iota$, let binary $a_{p,k}$ represent if vehicle $p$ can feasibly serve the customer $c_k$. Then, the acceptance decision $a_k$ is defined as 
$$ a_k=\left\{
\begin{array}{rcl}
0,       &      & {\textrm{if }} \sum_{p=1}^{P} a_{p,k} =0,\vspace{0.2cm}\\
1,     &      & {\textrm{if }} \sum_{p=1}^{P} a_{p,k} \geq 1.
\end{array} \right. $$
That is, if there are no vehicles that can feasibly serve the customer, the customer is not offered the service. If there there is at least one vehicle that can feasibly serve the customer, we accept the request. Note that when there are more than one vehicle that can offer the service, we select the one that has the minimum increase in tour time.
\subsection{Intra-day policy.} See~\ref{appendix:intra_day} and \cite{chen2023same} for details of the policy. 
\subsection{Bucket policy.} 
Assume on day $\tau_m$, the expected demands for the $I$ regions are $D_{\tau_m,1},\dots,D_{\tau_m,I}$. In a state $S_{\tau_m,k}^\iota$, the number of requests that have been accepted from Region~$i$ until $S_{\tau_m,k}^\iota$ is calculated as 
$$\mu_{\tau_m,i}=\sum_{k=0}^{K}\mathbbm{1}_{i}(c_{\tau_m,k})\cdot a_{\tau_m,k}.$$
Let $r(c_{\tau_m,k})$ represent the region from which the request is made. Then, the acceptance decision is defined as 
$$ a_{\tau_m,k}=\left\{
\begin{array}{rcl}
0,       &      & {\textrm{if }} \mu_{\tau_m,r(c_{\tau_m,k})} \geq\frac{D_{\tau_m,1}+\dots+D_{\tau_m,I}}{I},\vspace{0.2cm}\\
1,     &      & {\textrm{if }} \mu_{\tau_m,r(c_{\tau_m,k})} <\frac{D_{\tau_m,1}+\dots+D_{\tau_m,I}}{I}.
\end{array} \right. $$

\subsection{Reward-shaping policy (RRL).}
This policy ignores demand evolution during training, i.e., using initial or historical demands all the time. Assume the expected demands for the regions are $D_{1,1},\dots,D_{1,I}$. During training, the reward of a decision $x_k^\iota=(a_k,\Theta_k)$ in state $S_k^\iota$ is
\begin{equation}
R^\iota(S_k^\iota,x_k^\iota)=\left\{
\begin{array}{rcl}
\frac{max\{D_{1,1},\dots,D_{1,I}\}}{\sum_{i=1}^{I}\mathbbm{1}_{i}(c_{k})\cdot D_{1,i}},       &      & {\textrm{if $a_k=1$,}}\vspace{0.2cm}\\
0,     &      & {\textrm{if $a_k=0$.}}
\end{array} \right.
\end{equation}
The numerator of the fraction finds the maximum expected demand over all regions. The denominator calculates the expected demand for the region that customer $c_k$ resides in. 

\subsection{Manipulated-reward policy (MRL).}
The training of this policy is the same as that for the intra-day policy. However, during the implementation of the policy, we manipulate the reward of customers. Assume on day $\tau_m$, the expected demands for the $I$ regions are $D_{\tau_m,1},\dots,D_{\tau_m,I}$, respectively. During implementation, the reward is calculated as 
$$R^\iota(S_{\tau_m,k}^\iota,x_{\tau_m,k}^\iota)=\left\{
\begin{array}{rcl}
\frac{max\{D_{\tau_m,1},\dots,D_{\tau_m,I}\}}{\sum_{i=1}^{I}\mathbbm{1}_{i}(c_{\tau_m,k})\cdot D_{\tau_m,i}},       &      & {\textrm{if $a_{\tau_m,k}=1$,}}\vspace{0.2cm}\\
0,     &      & {\textrm{if $a_{\tau_m,k}=0$.}}
\end{array} \right.$$
\section{Additional Experimental Results: Detailed End-of-Year Demand Values}\label{appendix:tables}

This section presents the detailed demand values at the end of the two-year horizon. 

\begin{table}[h]
    \centering
    \addtolength{\tabcolsep}{1pt}
    \setlength\extrarowheight{-50pt}
    \begin{tabular}{@{}>{}l*{10}{l}@{}}
    \toprule[1pt]
        Demand model & $\alpha$ & $\bar{r}$ & IRL-E & IRL-P & Intra-day & Myopic & Bucket & RRL & MRL\\ 
        \midrule
        & & & \multicolumn{7}{c}{Geography (a)}\\ 
        \midrule
        Capacitated & 0.25 & - & \textbf{381.5} & 289.6 & 361.4 & 350.7 & 350.8 & 356.4 & 361.3 \\ 
        &0.5 & - & \textbf{380.6} & 287.8 & 360.1 & 349.8 & 349.9 & 355.6 & 360.7 \\ 
        &0.75 & - & \textbf{380.3} & 286.6 & 359.9 & 348.8 & 349.8 & 354.6 & 360.0 \\ 
        Uncapacitated &- & 0.5 & 568.5 & \textbf{763.7} & 575.1 & 536.7 & 537.1 & 554.1 & 572.9 \\ 
        &- & 0.55 & 517.8 & \textbf{697.1} & 513.4 & 479.4 & 479.3 & 491.5 & 510.7 \\ 
        &- & 0.6 & 469.3 & \textbf{623.3} & 461.6 & 431.3 & 431.4 & 441.6 & 459.5 \\ 
        &- & 0.65 & 430.3 & \textbf{560.2} & 418.8 & 389.6 & 389.5 & 400.1 & 415.5 \\ 
        &- & 0.7 & 400.5 & \textbf{507.4} & 382.6 & 353.5 & 354.0 & 366.3 & 378.1 \\ 
        &- & 0.75 & 383.4 & \textbf{461.5} & 351.5 & 322.5 & 323.6 & 337.0 & 347.2 \\ 
        &- & 0.8 & 361.1 & \textbf{423.7} & 325.7 & 295.6 & 295.6 & 310.2 & 320.0 \\ 
        &- & 0.85 & 334.7 & \textbf{390.9} & 302.6 & 272.1 & 271.8 & 285.5 & 295.6 \\ 
        \midrule
        & & & \multicolumn{7}{c}{Geography (b)}\\ 
        \midrule  
        Capacitated & 0.25 & - & \textbf{382.3} & 379.1 & 367.0 & 354.9 & 355.1 & 367.0 & 367.0 \\
        &0.5 & - & \textbf{381.9} & 378.9 & 366.7 & 354.6 & 354.0 & 366.7 & 366.5 \\ 
        &0.75 & - & \textbf{382.3} & 378.4 & 367.1 & 355.2 & 354.4 & 367.1 & 367.0 \\ 
        Uncapacitated &- & 0.5 & \textbf{525.7} & 412.9 & 489.0 & 491.9 & 492.2 & 489.0 & 485.4 \\ 
        &- & 0.55 & \textbf{470.6} & 367.3 & 443.6 & 439.9 & 440.0 & 443.6 & 441.7 \\ 
        &- & 0.6 & \textbf{425.1} & 307.9 & 407.2 & 396.7 & 396.8 & 407.2 & 407.6 \\ 
        &- & 0.65 & \textbf{391.8} & 267.9 & 378.1 & 361.5 & 361.2 & 378.1 & 378.4 \\ 
        &- & 0.7 & \textbf{366.7} & 249.8 & 353.3 & 332.2 & 331.8 & 353.3 & 354.0 \\ 
        &- & 0.75 & \textbf{345.4} & 239.4 & 331.7 & 308.2 & 307.8 & 331.7 & 332.9 \\ 
        &- & 0.8 & \textbf{317.2} & 235.0 & 312.5 & 288.5 & 287.9 & 312.5 & 314.5 \\ 
        &- & 0.85 & \textbf{330.3} & 233.2 & 295.4 & 271.6 & 271.2 & 295.4 & 298.1 \\ 
        \midrule        
        & & & \multicolumn{7}{c}{Geography (c)}\\ 
        \midrule        
        Capacitated & 0.25 & - & \textbf{365.8} & 359.9 & 349.9 & 350.0 & 349.9 & 344.8 & 347.2 \\ 
        &0.5 & - & \textbf{363.1} & 358.3 & 347.5 & 347.7 & 347.7 & 342.8 & 345.5 \\ 
        &0.75 & - & \textbf{361.5} & 355.5 & 345.9 & 346.6 & 346.5 & 341.2 & 349.6 \\ 
        Uncapacitated &- & 0.5 & 531.1 & 479.9 & 531.0 & \textbf{536.0} & 535.6 & 513.2 & 531.0 \\ 
        &- & 0.55 & 471.4 & 463.8 & 470.2 & \textbf{477.8} & \textbf{477.8} & 456.3 & 470.2 \\ 
        &- & 0.6 & 430.9 & \textbf{441.4} & 422.5 & 429.4 & 429.1 & 410.4 & 422.5 \\ 
        &- & 0.65 & 405.2 & \textbf{422.2} & 384.1 & 387.4 & 387.9 & 373.4 & 384.1 \\ 
        &- & 0.7 & 382.8 & \textbf{403.9} & 350.5 & 351.9 & 350.6 & 340.7 & 350.5 \\ 
        &- & 0.75 & 359.2 & \textbf{396.9} & 322.4 & 321.3 & 319.2 & 313.8 & 322.4 \\ 
        &- & 0.8 & 334.8 & \textbf{384.6} & 296.9 & 294.2 & 290.4 & 290.6 & 296.9 \\ 
        &- & 0.85 & 310.9 & \textbf{364.1} & 274.5 & 270.7 & 261.2 & 270.0 & 274.5 \\ 
        \bottomrule[1pt]
    \end{tabular}
    \caption{Average number of final expected demand policy.}
    \label{table:results_demand}
\end{table}

\end{document}